\def\eqref#1{equation~\ref{#1}}
\def\1{\bm{1}}
\DeclareMathAlphabet{\mathsfit}{\encodingdefault}{\sfdefault}{m}{sl}
\SetMathAlphabet{\mathsfit}{bold}{\encodingdefault}{\sfdefault}{bx}{n}
\title{Birds of a Feather Trust Together: Knowing When to Trust a Classifier via Adaptive Neighborhood Aggregation}
\author{\name Miao Xiong\textsuperscript{1}, Shen Li\textsuperscript{1}, Wenjie Feng\textsuperscript{1}, Ailin Deng\textsuperscript{2}, Jihai Zhang\textsuperscript{2}, Bryan Hooi\textsuperscript{1,2} \\
        \email \{miao.xiong, shen.li\}@u.nus.edu, wenjie.feng@nus.edu.sg, ailin@u.nus.edu, \{jihai, bhooi\}@comp.nus.edu.sg \\
      \addr \textsuperscript{1} Institute of Data Science, National University of Singapore \\
      \textsuperscript{2} Department of Computer Science, National University of Singapore 
}
\newtheorem{theorem}{Theorem}
\newtheorem{problem}{Problem}
\theoremstyle{definition}
\theoremstyle{remark}
\newcommand{\hide}[1]{}
\definecolor{babyblueeyes}{rgb}{0.19, 0.55, 0.91}
\newcommand{\method}{\textsc{NeighborAgg}\xspace}  
\newcommand{\methodcmd}{\textsc{NeighborAgg-CMD}\xspace}  
\newcommand{\predonly}{\textit{ProbOnly}\xspace}
\newcommand{\neionly}{\textit{NeighOnly}\xspace}
\newcommand{\kdtree}{KD-tree\xspace}
\newcommand{\trustscore}{trustworthiness score\xspace}
\newcommand{\aggoperator}{\textsc{Agg}\xspace}
\newcommand{\linecomment}[1]{\textcolor{babyblueeyes}{\textit{$\triangleright$ #1}}}
\begin{document}

\maketitle

\begin{abstract}
How do we know when the predictions made by a classifier can be trusted? This is a fundamental problem that also has immense practical applicability, especially in safety-critical areas such as medicine and autonomous driving. 
The de facto approach of using the classifier's softmax outputs as a proxy for trustworthiness suffers from the over-confidence issue; while the most recent works incur problems such as additional retraining cost and accuracy versus trustworthiness trade-off.
In this work, we argue that the trustworthiness of a classifier's prediction for a sample is highly associated with two factors: the sample's neighborhood information and the classifier's output. 
To combine the best of both worlds, we design a model-agnostic post-hoc approach \method to leverage the two essential information via an adaptive neighborhood aggregation. 
Theoretically, we show that \method is a generalized version of a one-hop graph convolutional network, inheriting the powerful modeling ability to capture the varying similarity between samples within each class. 
We also extend our approach to the closely related task of mislabel detection and provide a theoretical coverage guarantee to bound the false negative.
Empirically, extensive experiments on image and tabular benchmarks verify our theory and suggest that \method outperforms other methods, achieving state-of-the-art trustworthiness performance. \footnote{Our code is publicly available at https://github.com/MiaoXiong2320/NeighborAgg.git.}.

\end{abstract}

\section{Introduction}
\label{sec:intro}

In recent years, interactions with AI systems have become increasingly pervasive in all walks of our daily lives. 
As machine learning models become more widely involved in our decision-making processes, 
the robustness and trustworthiness of their decisions need to be carefully scrutinized~\citep{Varshney2017OnTS}. 
This is of vital importance in many scenarios, 
especially in safety-critical areas, such as medical applications,
where successful deployment is highly dependent on a model's ability to detect an incorrect prediction, 
so that humans can intervene when necessary~\citep{shi2019probabilistic, chang2020data, li2021spherical}.
This leads to our central question: \emph{how can we know when the predictions made by a classifier can be trusted?}

In this paper, we investigate the trustworthiness of the prediction given by a classifier, which serves as a measure for the  classifier's quality rather than the data. 
Concretely, given some i.i.d data and an pretrained classifier
(referred to as \emph{`base classifier'} hereinafter),
the goal is to devise a discriminative and accurate \emph{trustworthiness score}, such that \textbf{higher scores indicate a belief that the classifier's predicted class is more likely to be correct}. 
In this way, the users can easily determine whether they should trust the prediction output by machine learning models, or they should resort to domain experts for manual predictions. In the literature, this task is also referred to as ``trustworthiness prediction'', ``failure prediction'', or ``misclassification detection''~\citep{jiang2018trust, corbiere2019addressing}.

\begin{figure}[!t]
    \small
    \centering    
    \includegraphics[width=0.99\linewidth]{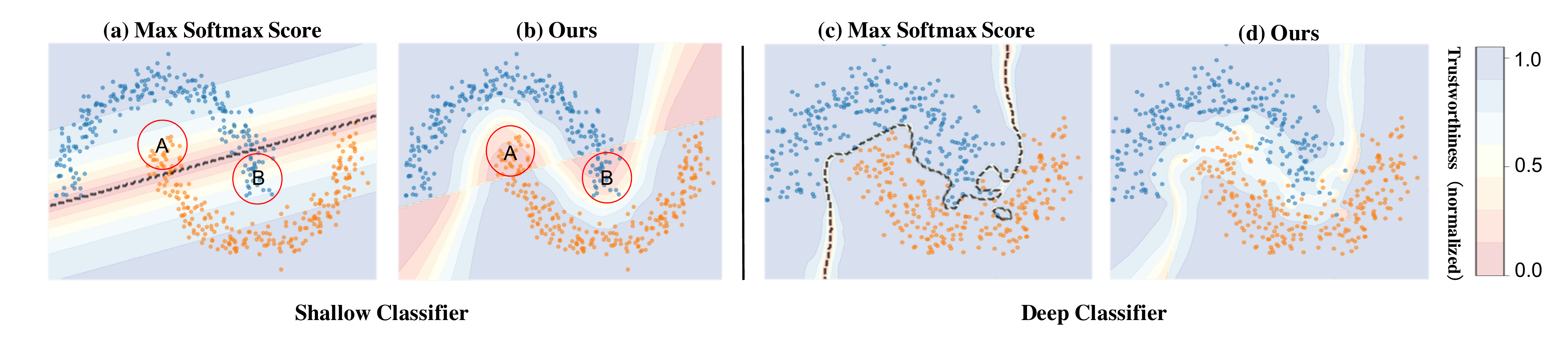}
    \caption{Comparison between max softmax scores and \method scores (ours), 
    based on a shallow base classifier (left) and a deep base classifier (right), respectively. 
    The color of the points indicates its ground truth label while the color of the background shows the corresponding trustworthiness score. The dotted black line demonstrates the decision boundary of the base classifier. 
    (\textbf{a}): The points marked by circles `A' and `B' overstep the decision boundary, being misclassified while some of them still get high max softmax scores. On the contrary, (\textbf{b}): our algorithm can correctly assign these points the lowest trustworthiness scores. 
    (\textbf{c}): Max softmax scores from base classifiers are potentially over-confident near the decision boundary whereas (\textbf{d}): our proposed score resolves this issue in a model-agnostic manner by inspecting their neighbors.
    } 
    \label{fig:toy-vis} 
\end{figure}

The most common approach is to employ a classifier's softmax output (i.e. the maximal value of a softmax vector, referred to as \textit{confidence score} hereinafter) as the proxy for trustworthiness~\citep{hendrycks17baseline}. However, this approach has been found to be over-confident~\citep{guo2017calibration}. 
\figurename{~\ref{fig:toy-vis}} illustrates this issue 
over 2D toy datasets:
the points marked by circles `A' and `B' in \figurename{~\ref{fig:toy-vis}}a are misclassified with high confidence scores.
In \figurename{~\ref{fig:toy-vis}}c, while all samples have been correctly classified, the base classifier assigns excessively high confidence scores on almost all the data points, even those near the classification boundary, making the decision boundary (full of bends and curves) prone to noise.
On the contrary, our method addresses these issues by utilizing information from the \emph{neighbors} of each point rather than just the point itself, thereby giving much lower trustworthiness scores to those misclassified points (\figurename{~\ref{fig:toy-vis}}b) 
and better reflecting the uncertainty of points near the decision boundary (\figurename{~\ref{fig:toy-vis}}d). 

\begin{wrapfigure}{r}{0pt}
    \centering
    \includegraphics[width=0.25\linewidth]{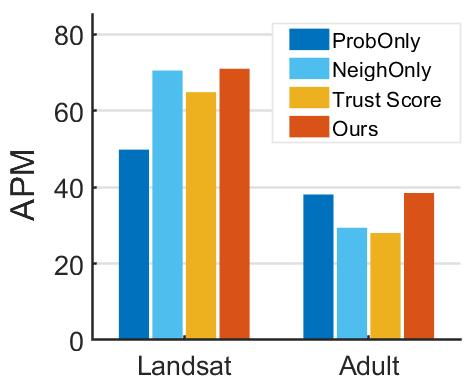}
    \vspace{-0.3in}
    \caption{Sources of information for trustworthiness prediction. Our adaptive approach outperforms other methods that only use the classifier's prediction (\texttt{ProbOnly}) or neighborhood information (\texttt{NeighOnly}), and \texttt{Trust Score}.}
    \label{fig:perform_example}
\end{wrapfigure}

Other related works include uncertainty-aware methods and post-hoc methods. The uncertainty-aware methods such as MC-dropout~\citep{Gal2016DropoutAA}, Deep Ensemble~\citep{Lakshminarayanan2017SimpleAS} and Dirichlet-based approaches \citep{Charpentier2020PosteriorNU} typically involve retraining the classifier due to the modification of network architecture, and can incur trade-offs between classifier accuracy and the performance of trustworthiness prediction.
In contrast, the \emph{post-hoc} setting avoids such extra-cost by focusing on a pretrained classifier. Among these  algorithms, \citet{corbiere2019addressing} builds on a strong assumption that the base classifier is always over-confident,
which fails in many cases~\citep{Wang2021BeCT}.
Trust Score \citep{jiang2018trust} leverages the neighborhood information by a hand-designed and non-trainable function, suffering from limited functional space and modeling capacity. 




Inspired by the commonly-held \emph{neighborhood-homophily} assumption~\citep{ fix1989discriminatory}, 
we argue that \emph{the trustworthiness of a classifier's prediction for a given sample
is highly associated with 
the sample's neighborhood information}, such as their labels and distances to the point itself.
That is, if a sample's predicted label is consistent with the majority of its neighbors' labels, this prediction is more likely to be reliable; otherwise, we tend to assign it a lower trustworthiness score.
To capture the various correlation between the sample and its neighborhood in a more flexible manner,  we devise an adaptive approach to learn the scoring function, thereby ensuring superior capacity than Trust Score~\citep{jiang2018trust}.
\figureautorefname{~\ref{fig:perform_example}} verifies the advantage by showing that the adaptive function (\texttt{NeighOnly} and \texttt{Ours}) outperforms \texttt{Trust Score}. 

Furthermore, we believe that the classifier's predictive output is also an indispensable source of 
information for the trustworthiness prediction, 
if not more so than the sample's neighborhood information, 
particularly in cases where the classifier is sufficiently reliable or the neighborhood-homophily assumption does not perfectly meet.
This is further borne out by  \figureautorefname{~\ref{fig:perform_example}} 
where using the classifier output (\texttt{ProbOnly} and \texttt{Ours}) outperforms 
using only neighborhood information (\texttt{NeighOnly} and \texttt{Trust Score}) for the Adult dataset. 
In this paper, we propose a model-agnostic algorithm, termed as \method, 
for the trustworthiness prediction by leveraging the neighborhood information and the classifier output via an adaptive scoring function that combines the best of both worlds. 
Theoretically, we demonstrate that our method is essentially a generalized one-hop graph convolutional network, 
and hence inherits the powerful modeling capacity to capture the varying similarity within each class, making it insensitive to hyperparameters for neighbor selection.
Owing to the adaptive design, 
these two factors are able to act in a complementary manner when determining the \trustscore.
Our method is also effective by achieving $7.63\%$ gain on APM and $2\%$ gain on AUC on average for the tabular dataset.



Additionally, we apply our approach to the closely related task of detecting mislabeled data samples, and propose the \methodcmd algorithm for mislabel detection.  
Furthermore, we obtain a theoretical coverage guarantee for this algorithm to bound the probability of false negative predictions.
To the best of our knowledge, the present work is 
the first to adapt to real-world noisy data setting and achieves a promising result, 
which we believe is of independent interest.

In summary, our main contributions are as follows:

\begin{compactitem}
    \item We propose a model-agnostic post-hoc algorithm \method to measure the trustworthiness of a classifier's predictions. Moreover, by demonstrating the theoretical equivalence with a generalized graph convolutional network, we provide a better understanding into how our approach works. 
    \item We propose \methodcmd, which adapts our method to mislabel detection and provide a noise-robust coverage guarantee to bound the false negative probability.
    \item Experiments on multiple tabular and image datasets showcase that the proposed \method consistently outperforms other state-of-the-art methods by clear margins.
    Additionally,
    we show that \methodcmd is able to identify mislabelled samples with promising results. 
\end{compactitem}


\section{Preliminaries and Notations}
\label{sec:notation}

We aim to measure a classifier's trustworthiness in the context of multi-class classification with $C \ge 2$ categories.\
Given a set of $N$ data points $\mathcal{X} = \left\{\mathbf{x}^{(1)}, \ldots, \mathbf{x}^{(N)}\right\}$, 
with $\mathbf{x}^{(i)} \in$ $\mathbb{R}^{D}$,  and their corresponding labels 
$\mathcal{Y} = \left\{ y^{(1)}, \ldots, y^{(N)}\right\}$, with $y^{(i)} \in \mathcal{C} = \left\{0,1,\dots,C-1 \right\}$, 
let bold $\mathbf{y}^{(i)} \in \mathbb{R}^{C}$ denote the one-hot encoding of $y^{(i)}$.  
The dataset is split into training, validation and test set, denoted by 
$(\mathcal{X}_{tr}, \mathcal{Y}_{tr}), (\mathcal{X}_{val}, \mathcal{Y}_{val}), (\mathcal{X}_{ts}, \mathcal{Y}_{ts})$, respectively.
Formally, we define the base classifier as a mapping 
$\mathcal{F}: \mathcal{X} \mapsto \mathbb{R}^{C}$
which takes a data point $\mathbf{x}$ as input and 
outputs its predicted probability vector 
or logits $\mathbf{p} \in \mathbb{R}^{C}$ and predicted class $\hat{y}$. 
Unless otherwise stated, vectors and matrices are denoted by boldface lowercase and uppercase letters, respectively, and sets are denoted by calligraphic letters. 
All vectors are treated as \emph{column} vectors throughout the paper.

\begin{problem}[Trustworthiness Prediction]
Given a base classifier $\mathcal{F}: \mathcal{X} \mapsto \mathbb{R}^{C}$, 
the trustworthiness prediction problem is to 
give a trustworthiness score $t(x)= \mathcal{T}(x; \mathcal{F}, \mathcal{X}_{tr}, \mathcal{Y}_{tr}) \in \mathbb{R}$ for any $x \in \mathcal{X}_{ts}$\footnote{$\{\mathcal{X}_{ts}, \mathcal{Y}_{ts}\}$ and $\{\mathcal{X}, \mathcal{Y}\}$ are i.i.d datasets.}, where $\mathcal{T}$ is the designed function for trustworthiness prediction, with the goal that in perfect condition:
\begin{equation*}
t(x)=\left\{\begin{array}{ll}
0 & \quad y \neq \hat{y} \\
1 & \quad y=\hat{y}.
\end{array} \right.
\end{equation*}

\end{problem}

\section{Proposed Method}
\label{sec:meth}

\begin{figure}[t]
    \centering    
 \includegraphics[width=0.8\linewidth]{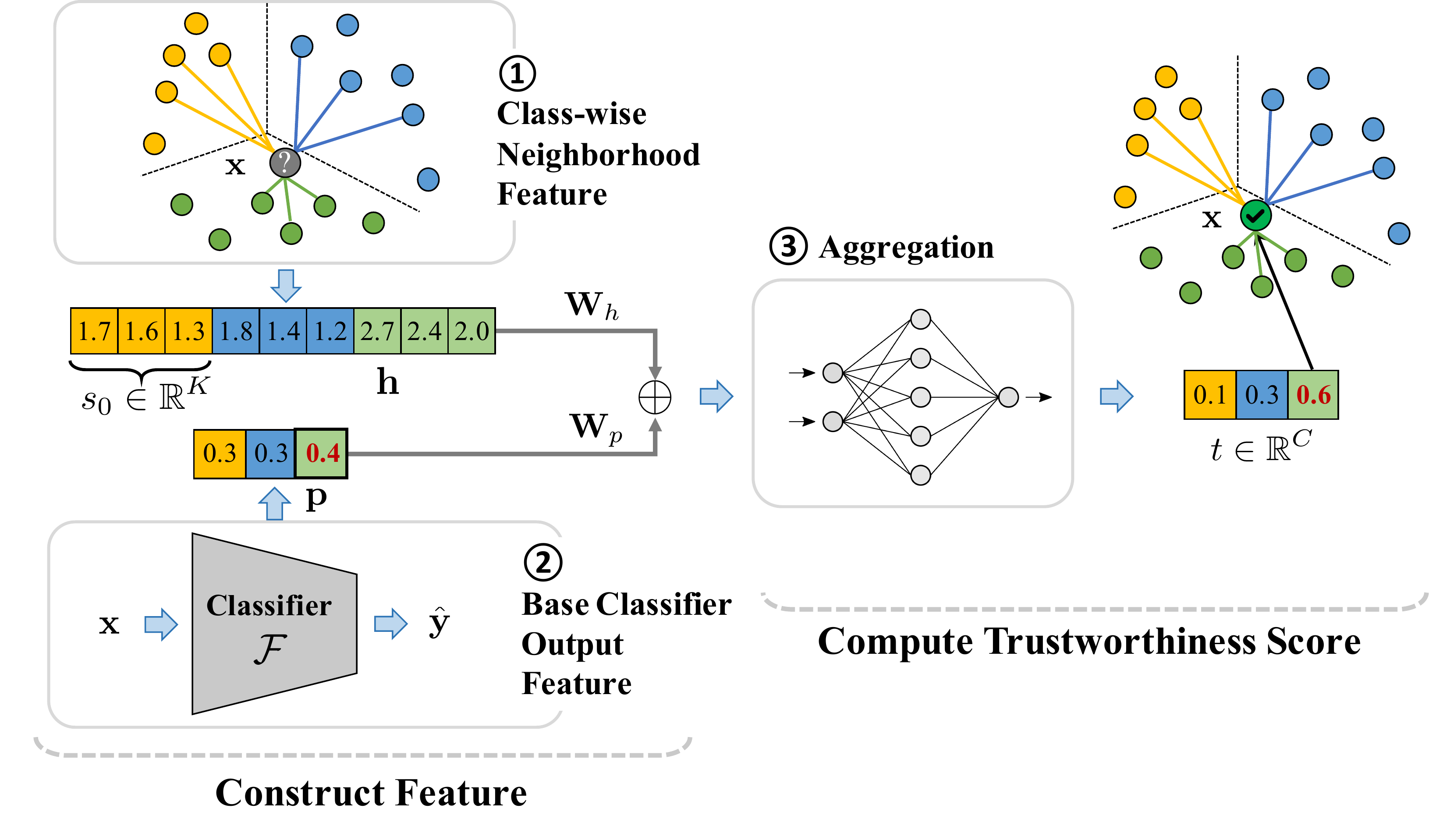}
    \vspace{-0.1in}
    \caption{A conceptual illustration of the proposed \method with $K=3$ and $C=3$. Given a sample $\mathbf{x}$ in question, we first compute two features: (1) class-wise neighborhood feature $\mathbf{h} = [\mathbf{s}_0, \mathbf{s}_1, \mathbf{s}_2]$ that reflects the similarity to its $K$ neighbors across every class, and (2) the base classifier's probability vector $\mathbf{p}$. Then, after two linear transforms $\mathbf{W_h}, \mathbf{W_p}$ followed by concatenation, (3) \method aggregates the information and outputs the final \trustscore corresponding to the base classifier's predicted class. Compared to the confidence score $0.4$, our approach assigns more trustworthiness to the predicted class by increasing the score to $0.6$.
    \label{fig:algo_image}
    }
\end{figure}

In this section, we first introduce the training and inference algorithms of our proposed \method. Then, we theoretically show that \method is a generalized version of a one-hop graph convolutional network. Lastly, we show a promising extension of \method applied in mislabel detection with a theoretical coverage guarantee.
\subsection{Algorithm}

As stated in the introduction, 
one of our key observations is that the trustworthiness of the classifier's prediction for a sample is highly associated with two information sources: 
\emph{the neighborhood of the sample} and \emph{the predictive output of the classifier}.
These two components can interact in a variety of ways.
How can we utilize the two information to determine a more reliable trustworthiness score?

Next, we will introduce our proposed model termed \emph{\method} and elaborate on how these two components are constructed and efficiently aggregated by our method.
The overall framework is illustrated in \figurename{~\ref{fig:algo_image}}.



\paragraph{Feature Construction.}
For a given sample $\mathbf{x} \in \mathcal{X}$, 
we utilize two input features: 
the neighborhood vector $\mathbf{h}$ 
and the classifier output vector $\mathbf{p}$ as shown in \figureautorefname{~\ref{fig:algo_image}}. %

For the classifier output feature, we use the aforementioned vector from the classifier $\mathbf{p} = \mathcal{F}(\mathbf{x})$.

The neighborhood vector consists of the similarity  
of a sample to its $K$ nearest neighbors across all the $C$ classes in the training dataset.
Specifically, for the sample $\mathbf{x}$ and each class $c \in \mathcal{C}$, 
we find its $K$ nearest neighbors from class $c$ of the training dataset: 
$\mathcal{N}_c= \{ \mathbf{n}_{c1}, \cdots, \mathbf{n}_{cK} \}$ 
and construct a similarity vector $\mathbf{s}_c$ as
\begin{equation}
    \label{eq:d}
    \mathbf{s}_c = \left[s_{c1}, s_{c2},\dots ,s_{cK} \right]^T,
          s_{ck} = \mathcal{K}_{f}(\mathbf{x},\mathbf{n}_{ck}), 
\end{equation}
where $\mathcal{K}_{f}$ is the Laplacian kernel with a transform $f$, 
i.e. $\mathcal{K}_{f}({\mathbf{x}}, \mathbf{z}) = \exp \left(-\left\Vert f(\mathbf{x})-f(\mathbf{z})\right\Vert_2\right)$. Cosine similarity can be used as well; but we find that in practice, Euclidean distance wrapped into Laplacian kernel performs better. For tabular dataset, the transform $f$ is set to the identity mapping of the original data, which we find empirically yields sufficiently good performance. For image dataset, a more complex transform (e.g. the backbone of the base classifier) are used for higher performance.
Then, the final neighborhood vector $\mathbf{h}$ is constructed 
by concatenating all such class-wise similarity vectors: 
\begin{equation}
    \label{equ:neighborvec}
    \mathbf{h} = [\mathbf{s}_1 \mathbin\Vert \mathbf{s}_2\mathbin\Vert,\cdots,\mathbin\Vert\mathbf{s}_C ],    
\end{equation}
where $\cdot\mathbin\Vert\cdot$ denotes the column concatenation operator. The procedure is shown in the \figureautorefname{~\ref{fig:algo_image}}.




\begin{algorithm}[t]
    \small
    \SetAlgoLined
    \KwIn{Training set $({\mathcal{X}_{tr},\mathcal{Y}_{tr}})$; Validation set $({\mathcal{X}_{val},\mathcal{Y}_{val}})$; Base classifier $\mathcal{F}$; Kernel $\mathcal{K}_{f}$; Aggregator 
\aggoperator; Training epoches $M$; Number of neighbors $K$.
}
    
    \KwOut{Parameters of \method: $\mathbf{W}_h$, $\mathbf{W}_p$, $\mathbf{W}$ (parameters of \aggoperator).}
    
     Initialize $\mathbf{W}_h$, $\mathbf{W}_p$, $\mathbf{W}$; 
    
    \For{$c=1$ to $C$ }{ \linecomment{Building class-wise KD-trees using the training set}
    
        Split from ${\mathcal{X}_{tr}}$: $ \mathcal{X}_c=\{\mathbf{x}|\mathbf{x}\in \mathcal{X}_{tr}, y=c \}$;
        
        Construct a \kdtree $\text{KDT}_c$ using $\mathcal{X}_c$ based on the kernel $\mathcal{K}_{f}$;
    }
    \For{$\text{epoch}=1$ to $M$}{ \linecomment{Training our \method using the validation set}
    
    \For{$\mathbf{x}$ in $\mathcal{X}_{val}$}{
        \For{$c=1$ to $C$}{
            Find $K$ nearest neighbors of $\mathbf{x}$ from $\text{KDT}_c$;
            
            Compute similarity vector $\mathbf{s}_c$ using \equationautorefname{~(\ref{eq:d})};
        }
        Compute the neighborhood vector $\mathbf{h} = [\mathbf{s}_1 \mathbin\Vert,\cdots,\mathbin\Vert\mathbf{s}_C ]$; 
        
        Compute the predicted vector with $\mathcal{F}$: $\mathbf{p} = \mathcal{F}(\mathbf{x})$;
        
        Compute the trustworthiness $\mathbf{t}$ using Aggregator: $\mathbf{t}=\aggoperator\left(\mathbf{W}_h \mathbf{h}, \mathbf{W}_p \mathbf{p}\right)$; 
        
        Compute the loss function using \equationautorefname{~(\ref{eq:loss})};
        
        Update $\mathbf{W}_h$, $\mathbf{W}_p$, $\mathbf{W}$ via gradient descent;

    }
    
    } 
    \Return $\mathbf{W}_h$, $\mathbf{W}_p$, $\mathbf{W}$
    \caption{\small Training Algorithm of \method}
    \label{alg:train}

\end{algorithm}

\paragraph{Aggregation.}
\label{sec:objective}
Considering the potentially varying contribution of $\mathbf{h}$ and $\mathbf{p}$ 
to the trustworthiness score,
we introduce two separate linear transformations to them, which are parameterized by
$\mathbf{W}_h\in\mathbb{R}^{C\times CK}$ and $\mathbf{W}_p \in \mathbb{R}^{C \times C}$, respectively. 
We then aggregate the two resultant vectors using an operator
$\aggoperator: \mathbb{R}^{C} \times \mathbb{R}^{C} \mapsto \mathbb{R}^{C}$,
which outputs a $C$-dimensional trustworthiness vector of $\mathbf{x}$ (one element for one class),
\begin{equation}
\label{eq:agg}
\mathbf{t}=\aggoperator\left(\mathbf{W}_h \mathbf{h}, \mathbf{W}_p \mathbf{p}\right).
\end{equation}
Here, the aggregation operator \aggoperator can be instantiated by any neural network.





The optimization process is carried out by reducing the negative log-likelihood loss
(NLL), i.e. $\mathbb{E}_{(\mathbf{x},\mathbf{y}) \sim (\mathcal{X}_{val}, \mathcal{Y}_{val})}[\mathcal{L}(\mathbf{x},\mathbf{y})]$ where each sample's loss is calculated as
\begin{equation}
\label{eq:loss}
     \mathcal{L}(\mathbf{x}, \mathbf{y}) =  -\frac{1}{ C} \sum_{c=1}^{C} \mathbf{y}_{c}\log(\mathbf{t}_{c}).
\end{equation}
The overall training procedure is summarized in Algorithm~\ref{alg:train}.

For simplicity, a single-layer feedforward neural network with a learnable weight matrix $\mathbf{W} \in \mathbb{R}^{C \times 2C}$ and a nonlinear activation $\sigma(\cdot)$ is used as the aggregator in this paper. 
Formally, the trustworthiness vector $\mathbf{t}$ (as shown in \figureautorefname{~\ref{fig:algo_image}}) can be expressed as  
\begin{equation}\label{eq:ti}
\mathbf{t}= \operatorname{softmax}  \left(\mathbf{W}^{T} \sigma\left(\left[\mathbf{W}_h \mathbf{h} \mathbin\Vert \mathbf{W}_p \mathbf{p}\right]\right)\right).
\end{equation}

Underlying the learnable framework, how do these two pieces of information cooperate during the aggregation? Curious about this question, we also investigate the mechanism and show the empirical result in \sectionautorefname{~\ref{sec:weightexp}}.

\paragraph{Inference.} 
Given a test sample $\Tilde{\mathbf{x}}$, we construct its corresponding neighborhood vector $\Tilde{\mathbf{h}}$ from the training dataset and fetch its classifier output vector $\Tilde{\mathbf{p}}$ from the base classifier $\mathcal{F}$. 
Then we evaluate the trustworthiness vector $\Tilde{\mathbf{t}}$ using \eqref{eq:agg} and fitted model parameters $\mathbf{W}, \mathbf{W}_h$ and $\mathbf{W}_p$. 
Finally, the trustworthiness score can be evaluated by indexing the trustworthiness vector using predicted class $c^*$, i.e., $\Tilde{t}_{c^*}$.
 

\subsection{Relation to Graph Neural Networks}
\label{relation-gnn}


In this section, we study the relations between our design and graph neural networks (GNNs) and show that our approach is inherently more flexible than GNNs in terms of aggregating neighborhood information 
to augment the classifier output for trustworthiness prediction. 
GNNs~\citep{kipf2016semi, xu2018powerful} have been a topic of interest in recent times for their powerful modeling capacity to aggregate neighbors, 
and this motivates us to compare our method with GNNs.
Among the several GNN variants, we choose the widely used graph convolutional neural network (GCN) 
as the subject for simplicity.

First, we show that our design of employing only one-hop neighbors for trustworthiness prediction is effective and efficient by comparing the performance of multi-hop GCNs with one-hop GCNs. 
Empirically, we demonstrate that the use of multi-hop GNNs does not have significant improvement and even degrades the performance for some datasets (see \tableautorefname{~\ref{tab:image-res}}). 
We argue that multi-hop neighborhood aggregation may lead to the over-smoothing issue and the noise accumulation risk, at least for our task.




Second, we prove that \method is essentially a \emph{generalized} version of a one-hop GCN: 
when imposing certain constraints on our \method (i.e., fixing the learned matrices $\mathbf{W}_h$ and $\mathbf{W}_p$ to be block diagonally-dominant), \method acts as a one-hop GCN. 
This equivalence is rigorously characterized as follows.
\begin{theorem}[One-hop GCN Equivalence]
\label{tho:equi}
Provided that $\mathbf{W}_h$ exhibits a block diagonal structure:
\begin{equation*}
    \mathbf{W}_h = {\frac{1}{K}}\Bigl[ I_{C \times C} \otimes \boldsymbol{1}^T \Bigr]  \, \mathrm{with} \, 
    \boldsymbol{1}^T = \underbrace{[1, 1, \cdots, 1]}_{K\, 1\text{'s}},
\end{equation*}
where $\otimes$ denotes the Kronecker product, and that $\mathbf{W}_p = I_{C \times C}$, 
\method operates as a one-hop Graph Convolutional Network with the node features 
$[\mathbf{y} \mathbin\Vert \mathbf{0}] \in \Delta^{2C-1}$ for $y \in \mathcal{Y}_{tr}$ and 
$[\mathbf{0} \mathbin\Vert \mathbf{p}] \in \Delta^{2C-1}$ for $\mathbf{p} = \mathcal{F}(\mathbf{x})$ with
$\mathbf{x} \in \mathcal{X}_{val} \cup \mathcal{X}_{ts}$, 
and the adjacency matrix $\mathbf{A}$ induced by a predefined kernel
$\mathcal{K}_{f}$ (e.g. Laplacian kernel). 
\end{theorem}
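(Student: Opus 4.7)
The plan is to verify the equivalence by unpacking both sides of the purported identification and showing that, under the stated constraints on $\mathbf{W}_h$ and $\mathbf{W}_p$, the pre-activation computed by \method coincides term-by-term with the one-hop message-passing output on the graph described in the statement. Since the tail of \eqref{eq:ti} is a linear layer $\mathbf{W}^T$ followed by $\sigma$ and softmax on $[\mathbf{W}_h \mathbf{h} \mathbin\Vert \mathbf{W}_p \mathbf{p}]$, it suffices to exhibit a GCN whose first-layer aggregation on the specified node features yields exactly this $2C$-dimensional concatenation; the remaining $\mathbf{W}^T$, $\sigma$, softmax then play the role of the GCN's classification head.

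First I would unpack the block structure of $\mathbf{W}_h$. A direct Kronecker-product calculation on $\mathbf{W}_h = \tfrac{1}{K}[I_{C\times C}\otimes \mathbf{1}^T]$ acting on the stacked similarity vector $\mathbf{h} = [\mathbf{s}_1 \mathbin\Vert \cdots \mathbin\Vert \mathbf{s}_C]$ gives
\begin{equation*}
(\mathbf{W}_h \mathbf{h})_c \;=\; \tfrac{1}{K}\mathbf{1}^T \mathbf{s}_c \;=\; \tfrac{1}{K}\sum_{k=1}^K \mathcal{K}_f(\mathbf{x}, \mathbf{n}_{ck}),
\end{equation*}
i.e., the $c$-th coordinate of $\mathbf{W}_h\mathbf{h}$ is the mean kernel similarity between $\mathbf{x}$ and its $K$ nearest training neighbors in class $c$. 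Combined with $\mathbf{W}_p = I_{C\times C}$, the argument of $\sigma$ in \eqref{eq:ti} becomes the $2C$-vector whose first $C$ entries are these class-wise averaged kernel scores and whose last $C$ entries reproduce the base classifier's probability vector $\mathbf{p}$.

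Next I would construct the GCN explicitly: place every $\mathbf{x}^{(j)} \in \mathcal{X}_{tr}$ and every $\mathbf{x} \in \mathcal{X}_{val}\cup\mathcal{X}_{ts}$ as a node with initial features $[\mathbf{y}^{(j)}\mathbin\Vert \mathbf{0}]$ and $[\mathbf{0}\mathbin\Vert \mathbf{p}]$ respectively, and define the adjacency so that each test/validation node $\mathbf{x}$ is joined to exactly its $K$ per-class nearest training neighbors $\mathbf{n}_{ck}$ with edge weight $A_{x,n_{ck}} = \mathcal{K}_f(\mathbf{x},\mathbf{n}_{ck})/K$ and carries a self-loop of weight $1$. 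Because a training neighbor of class $c$ only contributes to coordinate $c$ of the first $C$ dimensions (its feature is a one-hot $\mathbf{e}_c$ in that block, zero elsewhere), the one-hop aggregation at $\mathbf{x}$ equals
\begin{equation*}
A_{xx}\,[\mathbf{0}\mathbin\Vert \mathbf{p}] + \sum_{c,k} A_{x,n_{ck}} [\mathbf{e}_c \mathbin\Vert \mathbf{0}] \;=\; [\mathbf{W}_h\mathbf{h} \mathbin\Vert \mathbf{W}_p\mathbf{p}],
\end{equation*}
which matches precisely the input to $\sigma$ in \eqref{eq:ti}. Applying $\sigma$, the linear map $\mathbf{W}^T$, and the softmax then identifies \method with the GCN on a node-by-node basis.

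The main obstacle will be pinning down the adjacency precisely: the statement only says $\mathbf{A}$ is ``induced by'' the kernel, but to match the class-wise $K$-nearest-neighbor pooling hardcoded into $\mathbf{h}$ one must restrict the kernel graph to the sparsified per-class $K$-NN structure and rescale by $1/K$ (so that the aggregation's normalization is absorbed into the adjacency rather than into the weight matrix). Once this sparsification and normalization convention is made explicit, the remainder of the argument is a line of algebra plus the observation that concatenation with disjointly-supported features realizes the separate treatment of neighborhood and classifier streams inside a single GCN message.
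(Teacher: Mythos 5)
Your proposal is correct and follows essentially the same route as the paper's proof: both decompose $[\mathbf{W}_h\mathbf{h}\mathbin\Vert\mathbf{W}_p\mathbf{p}]$ into the self-feature $[\mathbf{0}\mathbin\Vert\mathbf{p}]$ plus a sum of kernel-weighted one-hot contributions $\tfrac{\mathcal{K}_f(\mathbf{x},\mathbf{n}_{ck})}{K}[\mathbf{e}_c\mathbin\Vert\mathbf{0}]$ from the class-wise $K$-NN training neighbors, and both identify the adjacency as the sparsified, $1/K$-scaled kernel graph on the bipartite training/evaluation node sets with a unit self-loop (the paper writes this in matrix form as $\mathbf{Z}'=\mathbf{Z}(I+\mathbf{A})$). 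The "obstacle" you flag about making the sparsification and normalization of $\mathbf{A}$ explicit is exactly how the paper resolves it.
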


\begin{proof}
The proof is relegated to \appendixautorefname{~\ref{app:gcn-proof}}.
\end{proof}



\paragraph{Remark.}
In fact, our approach is more flexible than one-hop GCN for feature aggregation, 
as $\mathbf{W_h}$ and $\mathbf{W_ p}$ in our setting can exhibit more flexible forms than the simple block diagonal structure. 
This is further verified by empirical studies (see \figurename{~\ref{fig:W1W2}}), which show that 
our model can exploit not only intra-class relations, 
but also inter-class relations, which one-hop GCN cannot.
More detailed analyses can be found in Section~\ref{sec:weightexp}.

\begin{algorithm}[!t]
    \caption{\small \methodcmd}
    \label{alg:metcmd}
    \SetAlgoLined
    \KwIn{Dataset $\{ (\mathbf{x_i},y_i, \hat{y}_i) \}_{i=1}^{N}\}$; Mislabeling rate $p$; Confidence level $\alpha$;
          Well-trained trustworthiness model \method.}
    \KwOut{Mislabeled sample set $\mathcal{S}$.} %
    
    
    $\mathcal{T} = \{\mathbf{t}_i ~ \vert ~ \mathbf{t}_i =  \method(\mathbf{x}_i), \quad \forall ~ 1 \le i \le n \}$
    
    $\mathcal{R} = \{r_i ~ \vert ~ r_i = (2 \cdot \mathbb{I}(\hat{y}_i = y_i) - 1) \cdot t_{iy_i}, \quad \forall ~ \mathbf{t}_i \in \mathcal{T}\}$

    $\mathcal{R} = \operatorname{sort}(\mathcal{R})$  \hfill \linecomment{Sort in non-increasing order}
    
    $B_\alpha = \left\lceil(N+1)(1- \alpha)  +\alpha N p ) \right\rceil$ 
    
    $\tau_\alpha = r_{(B_\alpha)}$  \hfill \linecomment{$r_{(B_\alpha)}$ is the $B_\alpha$-th largest element of $\mathcal{R}$}
    
    
    $\mathcal{S} = \{ (\mathbf{x}_i, y_i) ~ \vert ~ r_i \le \tau_{\alpha}, \quad \forall ~ 1 \le i \le N \}$
   
    \Return $\mathcal{S}$
\end{algorithm}

\subsection{Conformal Mislabel Detection: An Extension of \method}

In this section, we show that our \trustscore can also be used for another task, mislabel detection. 
In particular, we introduce the \methodcmd algorithm to assess the reliability of data labels in a mislabeled dataset. 
The detailed procedure is described in \algorithmautorefname{~\ref{alg:metcmd}}.

To identify mislabeled data in a noisy-labeled dataset, we compute a reliability score for the label of each sample using \method and a well-trained base classifier.
The reasoning behind this is that 
\emph{labels that contradict the classifier's prediction and have low trustworthiness scores are questionable}, meaning that when the neighborhood supports the classifier's prediction rather than the label itself, it is more likely the label that makes a mistake.
So we devise the \emph{reliability score} based on the class label $y$'s \trustscore $t_{y}$:
\begin{equation}
    r = (2 \cdot \mathbb{I}(\hat{y} = y) - 1) \cdot t_{y},
\end{equation}
where the indicator function $\mathbb{I}(\cdot)$ and the classifier's prediction $\hat{y}$ are used to detect whether the sample is misclassified. 
Samples with reliability scores lower than a certain threshold $\tau_{\alpha}$ are treated as mislabeled (i.e. $r < \tau_{\alpha}$).

To bound the probability of false negative detections, we apply the conformal anomaly detection~\citep{balasubramanian2014conformal} framework and extend the existing work to the noisy setting for determining the threshold $\tau_{\alpha}$.  
Given a dataset of size $N$ with 
mislabeling rate $p$, and a user-specified confidence level $\alpha$, 
we compute reliability scores for each sample in the validation dataset and sort them in non-increasing order as $(r_{(1)}, \dots, r_{(N)})$.
The threshold $\tau_\alpha$ is set to the $B_\alpha$-th largest element, i.e.,
\begin{equation}
    \label{eq:metcmd}
    \tau_\alpha = r_{(B_\alpha)}, \text{ where } B_\alpha = \left\lceil(N+1)(1- \alpha)  +\alpha N p \right\rceil.
\end{equation}



Next, we show the following theoretical guarantee which to the best of
our knowledge is the first to consider the real-world noisy data setting. 

\begin{theorem}[Noisy-robust Coverage Guarantee]
    \label{theo:convergency}
    For any given confidence level $\alpha \in (\frac{1}{N+1}, 1)$,
    with probability at least $1-\alpha$ over the random choice of any correctly labeled data point $(\Tilde{\mathbf{x}}, \Tilde{y})$, we have 
    $$ \Tilde{r} > \tau_\alpha, $$
    where $\Tilde{r}$ is the predicted reliability score of $\Tilde{\mathbf{x}}$ and 
    $\tau_\alpha$ is defined in \equationautorefname{~(\ref{eq:metcmd})}.
\end{theorem}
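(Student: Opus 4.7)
The plan is to leverage standard conformal prediction machinery on the subset of correctly labeled calibration points, then translate the resulting quantile bound back to the threshold $\tau_\alpha$ computed on the entire (noisy) calibration set. Concretely, let $\mathcal{I}^+ \subseteq \{1,\dots,N\}$ index the correctly labeled calibration samples, with $N_c := |\mathcal{I}^+| = N(1-p)$. Under the standard i.i.d./exchangeability assumption on the clean data together with a label-noise mechanism that is independent of the clean scores, the test reliability score $\tilde{r}$ is exchangeable with $\{r_i : i \in \mathcal{I}^+\}$. Letting $r^+_{(k)}$ denote the $k$-th largest of these $N_c$ correctly labeled scores, the usual rank argument gives $P(\tilde{r} > r^+_{(k)}) = k/(N_c+1)$ for every $k \in \{1,\dots,N_c\}$ (assuming continuous score distributions; ties can be handled by the standard randomized tie-breaking).

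First I would upper-bound $\tau_\alpha$ by a quantile computed on the correctly labeled subset. Among the top $B_\alpha$ scores of the full calibration set, at most $Np$ come from mislabeled samples (since only $Np$ such samples exist in total), so at least $B_\alpha - Np$ come from correctly labeled ones, and each of those is at least $r_{(B_\alpha)} = \tau_\alpha$. Consequently,
\begin{equation*}
r^+_{(B_\alpha - Np)} \;\geq\; r_{(B_\alpha)} \;=\; \tau_\alpha,
\end{equation*}
and hence $\{\tilde{r} > r^+_{(B_\alpha - Np)}\} \subseteq \{\tilde{r} > \tau_\alpha\}$.

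Next I would verify the algebraic identity $B_\alpha - Np = \lceil (1-\alpha)(N_c + 1) \rceil$ by expanding $B_\alpha = \lceil (N+1)(1-\alpha) + \alpha N p \rceil$, treating $Np$ as an integer and using $N_c = N(1-p)$. The assumption $\alpha > 1/(N+1)$ (tightened to $\alpha \geq 1/(N_c+1)$ when $p > 0$) ensures $1 \leq B_\alpha - Np \leq N_c$, so that $r^+_{(B_\alpha - Np)}$ is a well-defined order statistic. Plugging this into the exchangeability identity yields
\begin{equation*}
P(\tilde{r} > \tau_\alpha) \;\geq\; P\bigl(\tilde{r} > r^+_{(B_\alpha - Np)}\bigr) \;=\; \frac{\lceil (1-\alpha)(N_c+1) \rceil}{N_c + 1} \;\geq\; 1 - \alpha,
\end{equation*}
which is the desired coverage guarantee.

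The main obstacle will be rigorously justifying the exchangeability step: we need $\tilde{r}$ to be exchangeable with the correctly labeled calibration scores despite the fact that \method was fit on data potentially contaminated by label noise. This calls for a careful specification of the noise model (for instance, that mislabeled indices are selected independently of the clean features and that the mislabeling rate $p$ is interpreted as a fixed fraction rather than a Bernoulli draw) together with a split-conformal style argument that treats the trained \method as a fixed scoring function evaluated on a held-out clean batch. A secondary but more routine concern is the non-integrality of $Np$ and the presence of score ties; both are handled by the standard conformal prediction conventions (rounding up in the definition of $B_\alpha$, and randomized tie-breaking for ordering), neither of which affect the final bound.
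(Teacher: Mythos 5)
Your proof is correct and follows essentially the same route as the paper's: both arguments reduce to conformal exchangeability of the test point with the \emph{correctly labeled} calibration scores and then account for the worst case in which all mislabeled points sit above the threshold. Your deterministic domination $r^{+}_{(B_\alpha - Np)} \ge \tau_\alpha$ combined with the identity $B_\alpha - Np = \lceil (1-\alpha)(N_c+1)\rceil$ is simply a cleaner packaging of the paper's step of conditioning on the number $m \le \lfloor Np\rfloor$ of mislabeled points above the cutoff and bounding $\frac{N+1-B_\alpha}{N+1-m} \le \frac{N+1-B_\alpha}{N+1-\lfloor Np\rfloor}$, and you correctly flag (which the paper glosses over) that for $p>0$ one actually needs $\alpha \ge 1/(N_c+1)$, not merely $\alpha > 1/(N+1)$, for the relevant order statistic to be well defined.
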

\begin{proof}
The detailed proof is relegated to \appendixautorefname{~\ref{app:mislabel-proof}}. In contrast to existing work in conformal learning which requires an i.i.d. validation set, our algorithm uses a partitioning approach to allow for a more realistic setting involving a small percentage of mislabeled samples.
\end{proof}

\paragraph{Remark.}
Theorem~\ref{theo:convergency} suggests that the reliability score that our method outputs provide a theoretical guarantee --- with high probability, a correctly labeled data point will be given a score above the threshold $\tau_\alpha$. In other words, if we select the likely mislabeled samples by selecting those below the reliability threshold, we can bound the probability of a false positive (by $\alpha$).


\section{Experiments}
\label{sec:exp}

Through extensive experiments, we aim to answer the following questions: 
\begin{compactitem}
    \item{\bf{Mechanism Visualization}:} How does \method work?
    \item{\bf{Effectiveness}:} How well does \method perform on different types of datasets?
    \item{\bf{Ablation Study}:} How does each component of \method contribute to the trustworthiness performance?  
    \item{\textbf{Sensitivity}:} How sensitive to hyperparameters is \method?
    \item{\textbf{Computational Cost}:} How fast is \method?
    \item{\bf{Case Study}:} How is \method extended to mislabel detection?
\end{compactitem}
Due to space limitation, we refer discussions about hyperparameter sensitivity, computational cost, and case study of mislabel detection to the \appendixautorefname{~\ref{app:hyperparameter-sensitive}, ~\ref{app:comp-cost}, ~\ref{app:mislabel}}.






\subsection{Mechanism Visualization and Verification}
\label{sec:weightexp}

\begin{figure}[!t]
    \centering    
    \includegraphics[width=0.8\linewidth]{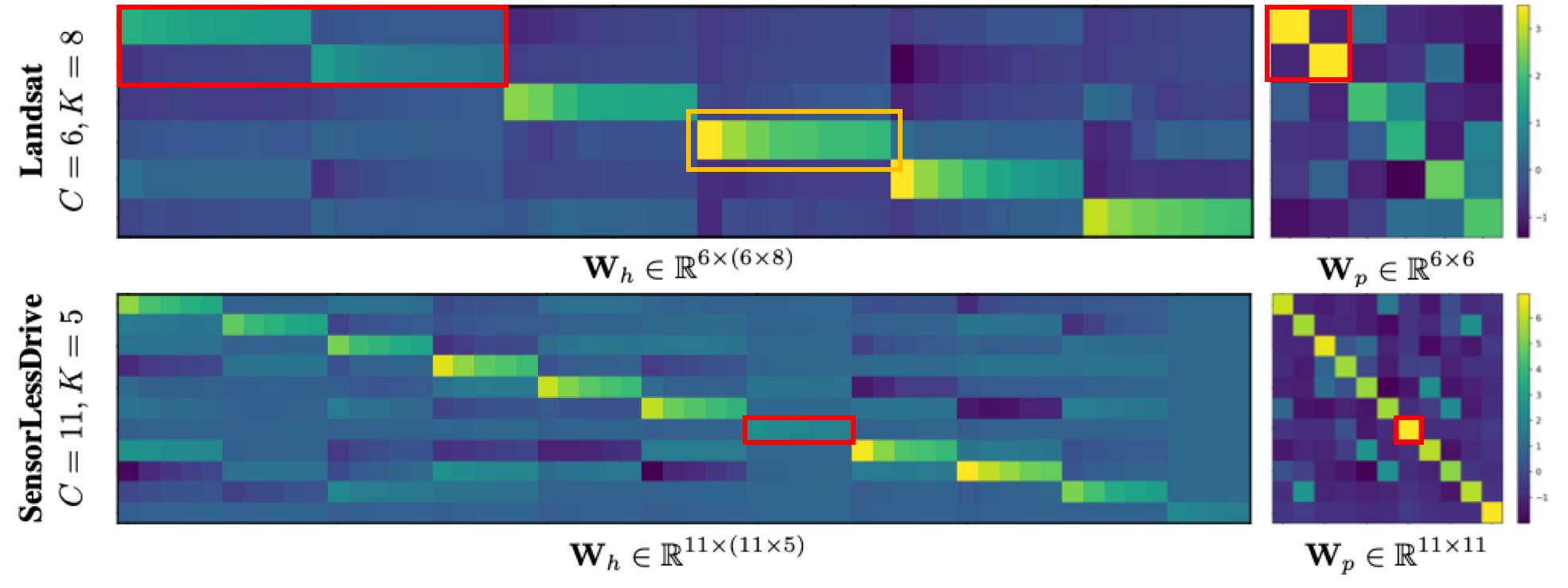}
    \caption{\textbf{Visualization of weight matrices $\mathbf{W}_h$ for similarity vectors and $\mathbf{W}_p$ for classifier output learned by \method}. Brighter colors indicate larger values. The dimmer diagonal blocks in $\mathbf{W}_h$ (e.g. in the red rectangle) are empirically associated with their corresponding brighter diagonal entries in $\mathbf{W}_p$, suggesting that \method combines sample confidence with neighborhood information in a \emph{complementary} manner. The different weights within each diagonal block in $\mathbf{W}_h$ (e.g. in the orange rectangle) suggests that \method can learn an appropriate $K$ (i.e. how many neighbors are necessary to determine the trustworthiness) for every class based on its local density, making it hyperparameter-insensitive.
    }
    \label{fig:W1W2-onerow} 
\end{figure}

In this section, we examine the mechanism of \method empirically, demonstrating that our method utilizes neighborhood and classifier information in a complementary manner and captures intra-class and inter-class relations inside the neighborhood.

\paragraph{Complementary effect}
We show that \method integrates neighborhood information with classifier output in a \emph{complementary} manner: it adaptively weighs the importance of the classifier's output against neighborhood information for each class. In other words, when the sample's neighborhood information is not accurate or useful, it relies more on the classifier output; and vice versa.
To show this empirically, \figurename{~\ref{fig:W1W2-onerow}} suggests that the dimmer diagonal blocks in $\mathbf{W}_h$ (e.g. in the red rectangle) are empirically associated with their corresponding brighter diagonal entries in $\mathbf{W}_p$, and vice versa. To further confirm this quantitatively, we calculate the Pearson's correlation coefficient $\rho$
between the diagonal blocks in $\mathbf{W}_h$ and the corresponding diagonal entries in $\mathbf{W}_p$ on Landsat.
The result shows a strong negative correlation $\rho=-0.90$, which sheds light on the complementary mechanism in \method. 



\paragraph{Intra-class and inter-class relations} We have three observations on how \method leverages neighbors: firstly, aligning with our motivation and \theoremautorefname{~\ref{tho:equi}}, the learned weight matrices $\mathbf{W}_h$ and $\mathbf{W}_p$ are significantly block-diagonally dominant as shown in \figurename{~\ref{fig:W1W2-onerow}}, 
exhibiting the neighbor-homophily property and our method's similarity to graph neural networks.
Secondly, the different weights within each diagonal block in $\mathbf{W}_h$ reflect different importance among neighbors of the same class (i.e. \emph{intra-class} proximity) as shown in the orange rectangle region of \figurename{~\ref{fig:W1W2-onerow}}. This suggests that \method can automatically learn how many neighbors are necessary to determine the trustworthiness for every class based on its neighborhood without tuning the hyperparameter $K$.
Thirdly, the off-diagonal blocks represent the inter-class relations, such as similar or exclusive relations among classes, which makes it more flexible in 
determining a robust \trustscore. We leave the more detailed discussion in \appendixautorefname{~\ref{app:mechanism}}.



\subsection{Experiment Setup}

\paragraph{Datasets.}

We evaluate our method on image datasets including CIFAR10 \citep{Krizhevsky09learningmultiple}, FashionMNIST \citep{xiao2017/online} and MNIST \citep{deng2012mnist}, and UCI tabular datasets \citep{Dua:2019}, including CardDefault, Landsat and LetterRecognition, etc. Statistics of each dataset are summarized in \appendixautorefname{~\ref{app:exp-setup}}. 

\paragraph{Compared methods.}
We compare our proposed \method with the following methods:
\begin{compactitem}
    
    \item \textbf{Confidence Score} \citep{hendrycks17baseline} employs the maximum softmax output of a classifier as a measure of trustworthiness. 
    \item \textbf{Temperature Scaling} \citep{guo2017calibration} modifies the confidence score using a temperature parameter $T$ learned from the validation set. 
    \item \textbf{TCP} \citep{corbiere2019addressing} trains a ConfidNet using an intermediate output of any neural networks as the input for regression to the desired softmax output. 
    
    \item \textbf{Trust Score} \citep{jiang2018trust} defines the trustworthiness measure as the ratio between the distance from the test sample to its nearest neighbor with labels excluding the predicted class, and the distance from the test sample to the nearest neighbor of the predicted class.
    
    \item \textbf{Top-label Calibration} \citep{gupta2021distribution} calibrates a classifier's softmax output by using histogram binning to reduce top-label multi-class calibration into binary calibration.
    \item \textbf{GCN-khop} uses k-hop GCN to aggregate neighborhood information and classifier output for trustworthiness prediction. Its detailed implementation can be found in \theoremautorefname{~\ref{tho:equi}} and \appendixautorefname{~\ref{app:gnn-algo}}. 
\end{compactitem}

\paragraph{Evaluation Metrics.}
\label{sec:evaluation-metrics}
Following the existing pioneering work on trustworthiness~\citep{hendrycks17baseline, corbiere2019addressing}, we adopted the \emph{same} metrics to evaluate the trustworthiness of a base classifier: AUC-ROC, APM and APC.
More details regarding specific evaluation procedures can be found in Sec. 2 of \citet{hendrycks17baseline}. All reported results are averaged over $5$ trials under distinct random seeds on the same splits of datasets. 


\paragraph{Implementation Details.}
 For tabular datasets, experiments are conducted based on three base classifiers, including logistic regression (LR)~\citep{peng2002introduction}, random forest (RF)~\citep{svetnik2003random} and multi-layer perceptrons (MLPs)~\citep{ruck1990multilayer}; while for image datasets, shallow convolutional networks, Resnet18 and Resnet50 \citep{he2016deep} are used.  We leave details such as hyperparameters to \appendixautorefname{~\ref{app:exp-setup}}.

\subsection{Effectiveness of \method} 

\begin{table*}[t]
    \small
    \centering
    \resizebox{\linewidth}{!}{
    \begin{tabular}{ll|ccc|ccc|ccc}  \toprule
    \multirow{2}{*}{Clf} & \multirow{2}{*}{Method} & \multicolumn{3}{c|}{LetterRecognition}                              & \multicolumn{3}{c|}{Landsat}                                        & \multicolumn{3}{c}{CardDefault}                                    \\
                     &                      & AUC \%               & APC \%               & APM \%               & AUC \%               & APC \%               & APM \%               & AUC \%               & APC \%               & APM \%               \\ \midrule
\multirow{6}{*}{LR}  & Confidence                                        & 85.28(0.18)          & 95.03(0.14)          & 61.25(0.51)          & 88.05(0.47)          & 97.77(0.07)          & 52.09(2.13)          & 65.19(0.30)          & 86.94(0.21)          & 33.44(0.32)          \\
                     & TempScaling                                        & 84.67(0.22)          & 94.83(0.15)          & 59.72(0.64)          & 87.20(0.42)          & 97.63(0.08)          & 48.85(1.73)          & 65.22(0.33)          & 87.04(0.25)          & 33.50(0.36)          \\
                     & TrustScore                                   & 95.75(0.23)          & 98.46(0.12)          & 86.93(0.63)          & 91.55(0.39)          & 98.39(0.12)          & 64.76(0.70)          & 61.61(0.42)          & 85.35(0.28)          & 28.42(0.41)          \\
                     & TCP                                         & 90.78(0.21)          & 96.96(0.13)          & 74.85(0.43)          & 89.47(0.49)          & 98.06(0.15)          & 54.25(1.78)          & \textbf{68.79(0.29)}         & \textbf{88.78(0.22) }         & 34.14(0.37)          \\
                     & TopLabel                                      & 78.58(0.31)          & 92.63(0.18)          & 44.85(0.53)          & 84.05(0.31)          & 96.45(0.10)          & 41.55(1.10)          & 64.80(0.45)          & 86.68(0.30)          & 33.91(0.40)          \\
                     & Ours                                        & \textbf{99.08(0.04)} & \textbf{99.72(0.01)} & \textbf{97.17(0.13)} & \textbf{93.40(0.17)} & \textbf{98.84(0.04)} & \textbf{72.54(1.40)} & 67.60(0.31)          & 87.45(0.22)          & \textbf{36.06(0.33)}  \\ \midrule
\multirow{6}{*}{RF}  & Confidence                                        & 93.94(0.29)          & 99.48(0.03)          & 51.41(1.97)          & 90.25(0.37)          & 98.69(0.09)          & 48.77(1.78)          & 68.89(0.32)          & 89.43(0.11)          & 33.24(0.37)          \\ 
                     & TempScaling                                        & 94.58(0.19)          & 99.55(0.02)          & 55.41(1.86)          & 89.26(0.24)          & 98.56(0.05)          & 46.88(0.91)          & 68.68(0.32)          & 89.31(0.16)          & 33.07(0.44)          \\
                     & TrustScore                                   & 90.96(0.23)          & 99.22(0.01)          & 39.19(1.46)          & 88.52(0.34)          & 98.51(0.04)          & 43.36(2.76)          & 59.68(0.29)          & 84.89(0.23)          & 25.84(0.39)          \\
                     & TCP                                         & 85.83(0.22)          & 98.71(0.06)          & 29.40(0.53)          & 85.07(0.77)          & 97.97(0.16)          & 34.30(1.62)          & 67.96(0.12)          & 89.57(0.14)          & 30.08(0.24)          \\
                     & TopLabel                                      & 83.99(0.51)          & 98.44(0.10)          & 26.79(0.55)          & 84.44(0.30)          & 97.57(0.11)          & 32.54(1.10)          & 67.64(0.36)          & 88.90(0.20)          & 32.14(0.38)          \\
                     & Ours                                        & \textbf{96.45(0.18)} & \textbf{99.69(0.02)} & \textbf{72.16(1.36)} & \textbf{91.23(0.26)} & \textbf{98.91(0.06)} & \textbf{53.60(1.80)} & \textbf{69.27(0.30)} & \textbf{89.61(0.09)} & \textbf{34.27(0.45)} \\ \midrule
                     
                                                                     & Confidence                                         & 90.71(0.18)          & 99.18(0.04)          & 39.59(0.93)          & 84.41(1.61)          & 96.95(0.71)          & 40.26(2.04)          & 68.99(0.32)          & 89.05(0.18)          & 34.17(0.49)          \\
                                                                     & TempScaling                                         & 93.83(0.15)          & 99.49(0.01)          & 52.68(0.58)          & 87.10(0.53)          & 97.98(0.16)          & 46.16(1.20)          & 68.46(0.39)          & 88.96(0.20)          & 34.69(0.45)          \\
                                                                     & TrustScore                                    & 88.53(0.31)          & 99.05(0.05)          & 32.28(0.60)          & 88.09(0.48)          & 98.32(0.09)          & 41.85(1.47)          & 60.20(0.39)          & 84.82(0.25)          & 26.60(0.35)          \\
                                                                     & TCP                                          & 79.91(0.58) & 97.62(0.15) & 25.61(1.06) & 86.01(1.01) & 97.78(0.26) & 39.70(2.67) & 67.77(0.19) & 88.91(0.09) & 31.17(0.27)        \\
                                                                     & TopLabel                                       & 78.78(1.55) & 98.09(0.20) & 16.65(0.79) & 81.29(1.02) & 96.70(0.29) & 30.16(0.93) & 67.64(0.37) & 88.68(0.15) & 32.81(0.44) \\
\multirow{-6}{*}{MLP}                                                & Ours                                         & \textbf{95.02(0.36)} & \textbf{99.58(0.04)} & \textbf{65.81(1.19)} & \textbf{91.75(0.39)} & \textbf{98.88(0.08)} & \textbf{57.80(0.91)} & \textbf{69.69(0.29)} & \textbf{89.51(0.19)} & \textbf{35.64(0.23)} \\
                                           \bottomrule
    \end{tabular}
    }
    \caption{The performance of our proposed model \method and other models on three tabular datasets (\textbf{mean$\pm$std}). We report the results of all models based on different base classifiers (LR, RF, MLP)
    and best results are emphasized in bold. We use \emph{TempScaling} for Temperature Scaling due to space limitation.
    }
    \label{tab:tabular-res}
\end{table*}

Performance results on tabular datasets and image datasets are summarized in \tablename{~\ref{tab:tabular-res}} and \tablename{~\ref{tab:image-res}} respectively, from which we make the following observations:

Firstly, our proposed \method outperforms other models under almost all metrics across benchmarks. Specifically, \tablename{~\ref{tab:tabular-res}} shows that our model achieves the most significant improvement on APM, with the highest performance gain of 12.17\%, and the average performance gain of 7.63\%. 
This suggests that our model performs best in identifying misclassified samples. Besides, \method achieves more than 2\% improvement on AUC in most cases. \tablename{~\ref{tab:image-res}} reveals that our model also achieves better or comparable performance on image datasets.

Secondly, the result suggests that the neighborhood information and classifier prediction are two essential and complementary sources of information for trustworthiness prediction. 
This is demonstrated by results shown in \tablename{~\ref{tab:tabular-res}} that Trust Score achieves better results than Confidence Score on LetterRecognition and Landsat when LR is the base classifier, whereas Confidence Score performs better on CardDefault.
Moreover, our method's outperformance of both information sources validates the complementary effect. 

Thirdly, our method consistently beats the GCN-based method across all datasets, suggesting that our formulation is more effective and efficient. In addition, \tableautorefname{~\ref{tab:image-res}} also demonstrates that one-hop neighborhood aggregation is sufficient and that utilizing multi-hop neighbors may lead to the over-smoothing issue, by showing that multi-hop graph convolutional neural networks have limited improvement compared to the one-hop model, and sometimes become worse.



\begin{table*}[t]
\small \centering
\resizebox{\linewidth}{!}{
\begin{tabular}{l|lll|lll|lll} \toprule
\multirow{2}{*}{Method} & \multicolumn{3}{c|}{MNIST}                                                            & \multicolumn{3}{c|}{FashionMNIST}                                                     & \multicolumn{3}{c}{CIFAR10}                                                          \\
                        & \multicolumn{1}{c}{AUC \%} & \multicolumn{1}{c}{APC \%} & \multicolumn{1}{c|}{APM \%} & \multicolumn{1}{c}{AUC \%} & \multicolumn{1}{c}{APC \%} & \multicolumn{1}{c|}{APM \%} & \multicolumn{1}{c}{AUC \%} & \multicolumn{1}{c}{APC \%} & \multicolumn{1}{c}{APM \%} \\ \hline
Confidence                    & 90.48(0.39)                & 98.93(0.06)                & 46.71(1.93)                 & 91.31(0.32)                & 99.10(0.03)                & 44.22(1.03)                 & 83.72(1.21)                & 94.97(0.66)                & 54.42(1.66)                \\
TempScaling                   & 90.50(0.40)                & 98.93(0.06)                & 47.27(1.99)                 & 91.33(0.31)                & \textbf{99.11(0.03)}       & 44.27(0.99)                 & 83.75(1.29)                & 94.96(0.68)                & 54.81(1.80)                \\
TrustScore                  & \textbf{96.40(0.28)}                & \textbf{99.61(0.04)}                & 78.53(1.25)                 & 91.31(0.21)                & 99.10(0.03)                & 47.27(0.86)                 & 86.98(0.75)                & 96.04(0.18)                & 63.29(3.84)                \\
TCP                     &   92.11(1.03)	             &	98.37(0.43)	                &	69.91(7.47)             &                           90.82(0.07)	&	98.83(0.03)	&	\textbf{50.52(1.33)}                            &  86.63(0.92)	&	95.36(0.16)	&	64.06(3.40)                \\
TopLabel                   & 90.35(0.31)&98.89(0.05)&43.31(1.02)&89.54(0.41)&98.62(0.17)&45.76(1.53)               & 85.24(1.17)&94.40(0.34)&60.90(5.59)                \\
Mahala                   & 75.88(0.23)& 96.91(0.03)& 21.96(0.48)&59.87(0.95)&94.26(0.06)& 11.15(0.08)               & 55.73(5.92)&82.93(3.07)&23.47(3.10)                \\
\midrule
GCN3hop                    & 91.77(0.27)	& 99.07(0.05)	&	55.75(1.77)       & 90.44(0.54)	&	98.94(0.08)	&	45.63(1.70)	        & 82.68(1.44)	& 93.78(0.31)	& 60.79(5.30)   \\ 
GCN2hop                    & 91.58(0.29)	& 99.05(0.05)	&	54.71(1.90)        & 90.35(0.56)	&	98.92(0.07)	&	45.63(1.91)        & 83.75(1.66)	& 94.18(0.41)	& 62.37(5.46)    \\ 
GCN1hop                    & 91.38(0.30)	& 99.02(0.06)	&	53.77(1.89)       & 90.31(0.56)	&	98.92(0.08)	&	45.60(1.76)	        & 84.08(1.46)	& 94.33(0.27)	& 62.63(5.39)   \\ 
Ours                    & \textbf{96.40(0.52)}       & 99.55(0.08)                & \textbf{81.02(1.91)}        & \textbf{91.52(0.16)}       & 99.04(0.02)                & 48.83(0.51)        & \textbf{87.52(1.07)}       & \textbf{96.05(0.51)}       & \textbf{65.27(4.03)}    \\ \bottomrule

\end{tabular} }
\caption{The performance of \method and other models on image datasets (\textbf{mean $\pm$ std}). 
Best results are emphasized in bold. We refer to \tableautorefname{~\ref{tab:tabular-res}} for the full name of abbreviations. 
}
\label{tab:image-res}
\end{table*}

\subsection{Ablation Study}

\begin{figure*}[t]
    \small
    \centering
    \subfigure[Comparison for LR base classifiers under the APM metric. \label{fig:abl_1}]
    {\includegraphics[width=0.7\linewidth]{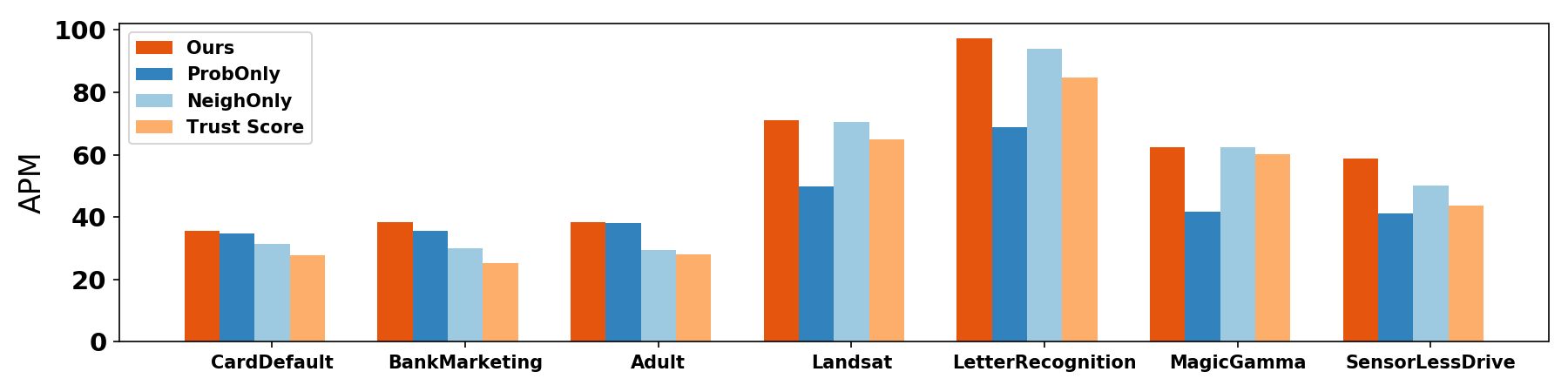} }
    \vspace{-0.1in}
    \subfigure[Comparison for RF base classifiers under the AUC metric.  \label{fig:abl_2}]
    {\includegraphics[width=0.8\linewidth]{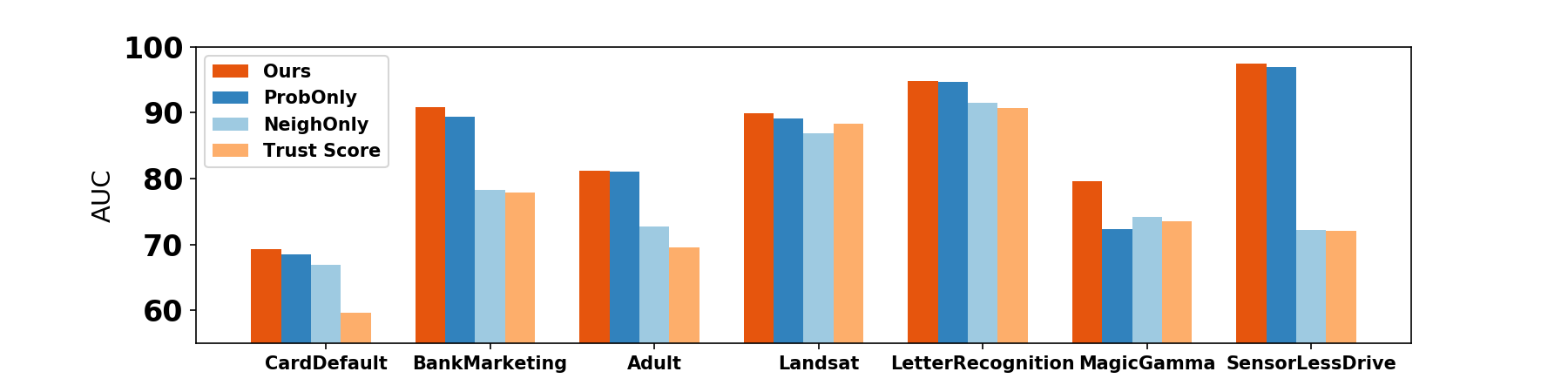} }
    \caption{\textbf{Performance results for ablation study.} 
       \method outperforms all the other model variants \predonly and \neionly across all datasets
        with LR and RF as the base classifiers, respectively. More results can be found in \appendixautorefname{~\ref{app:ablation-app}}.
      }
    \label{fig:ablation}
\end{figure*}

To demonstrate the effectiveness of each component and 
the adaptiveness of the \emph{learnable} weights in \method, 
we compare against its variants \predonly and \neionly,
\begin{compactitem}
    \item{\neionly:} solely takes neighborhood vectors as input,
    \item{\predonly:} solely inputs classifier output vectors,
\end{compactitem}
as well as the non-learnable baseline Trust Score.

\figurename{~\ref{fig:ablation}} demonstrates the comparison results of the ablation study on seven tabular datasets. Results using other base classifiers and other metrics are listed in \appendixautorefname{~\ref{app:ablation-app}}. 

We note that our \method 
consistently outperforms \predonly and \neionly across all datasets, especially on SensorLessDrive with 8.83\% gain and on BankMarketing with 3\% gain, 
which indicates that both vectors make non-negligible contributions to the final \trustscore, 
and that considering either of them alone is insufficient.
It supports the claim that the neighborhood and the predictive output of the classifier complement one another in determining the \trustscore. 

Moreover, the comparison between \neionly and Trust Score suggests that 
considering a set of neighborhood rather than solely class-wise nearest neighbors contributes to the performance gain;
by inspecting those neighborhoods, \method can utilize richer information and 
flexibly choose its receptive field, i.e., 
how many neighbors are necessary to determine the \trustscore. 
It also empowers the model to adaptively capture intra-class relations (i.e. different proximity of a sample's neighbors)
and inter-class relations, e.g., some classes may be more closely related as compared to other classes.

\subsection{Mislabel Detection: A Case Study}
\label{subsec:mislabel}

This section demonstrates the usefulness of our \methodcmd algorithm by presenting mislabeled samples in real-world datasets.
We use a dataset named \textit{QuoraInsQ} from the Kaggle competition ``Quora Insincere Questions Classification'' that aims to improve online environment by detecting toxic questions. The dataset consists of 1,306,122 questions which are manually categorized as sincere or insincere. The definition of an insincere question is one that intends to make a statement instead of eliciting helpful responses. In order to estimate the mislabeling rate, we manually relabel 500 randomly selected questions and utilize the fraction of incorrectly labeled samples as the mislabeling rate. 

Firstly, we use the model of the top-ranking team from the leaderboard as our base classifier and use \methodcmd to compute the reliability score for each sample. The mislabeling rate $p$ is estimated as $0.03$. Then, we run \methodcmd with the confidence level $\alpha=5\%$ and obtain the detected mislabeled results. 
Then we showcase some of the detected example questions with the lowest reliability scores in \tableautorefname{~\ref{tab:mislabel}}. We find that all of them were labeled as \emph{`insincere'} in the original dataset, but none of them breach the four rules that signify a question as insincere.
More experiments can be found in \appendixautorefname{~\ref{app:mislabel}}. 

\begin{table}[t]
    \centering
    \resizebox{\linewidth}{!}{
    \begin{tabular}{l}  \toprule
    \textbf{Detected mislabeled example questions}         \\ \hline
    What is the past tense of past tense?                                                                                                                                                                                      \\ \hline
    What is the difference between a fusion and a restaurant?                                                                                                                                         \\ \hline
    What are the new product for agent project?                                                                                                                                                    \\ \hline
    \begin{tabular}[c]{@{}l@{}}Which protagonist from a video game have you most related to?\end{tabular}                                                                                                      \\ \hline
    \begin{tabular}[c]{@{}l@{}}Do I have to appear for IMU CET again even if I get a good rank in it if I'm appearing for  improvement of  HSC board exam?\end{tabular}                                                 \\ \hline
    \begin{tabular}[c]{@{}l@{}}What are some important things/steps when starting a 
    film production company in Netherlands?\end{tabular}                                                                             \\ \hline
    \begin{tabular}[c]{@{}l@{}}What astrological combinations are needed to obtain a scholarship for studies?\end{tabular}                                                                       \\ \hline
    \begin{tabular}[c]{@{}l@{}}What advice would you give a person intending to buy a Nissan note, in terms of performance \\ i.e. traction,  fuel economy, maintenance and resale?\end{tabular}    \\ \bottomrule
    \end{tabular}
    }
    \caption{\small Mislabeled samples detected in \textit{QuoraInsQ} by our \methodcmd. These questions are labeled as insincere but are actually sincere.}   
    \label{tab:mislabel}
\end{table}

\section{Related Work}
\label{sec:background}




\paragraph{Trustworthiness Prediction.}
Trustworthiness prediction, also known as ``failure prediction'' and ``misclassification detection'' in the literature, aims to assign a discriminative score to every prediction given by a base classifier, indicating whether we can trust this prediction or not.
This has received increasing attention in recent times.
\citet{hendrycks17baseline} suggests using confidence score to tackle this problem. Therefore, the confidence calibration method~\citep{Kull2019BeyondTS, gupta2021distribution}, designed to mitigate the over-confidence issue of the confidence score, can also be applied to trustworthiness prediction by giving a more accurate calibrated confidence score.
Monte-Carlo dropout~\citep{Gal2016DropoutAA} and Deep-Ensemble~\citep{Lakshminarayanan2017SimpleAS} compute the output variance of multiple trials to detect incorrect predictions, while these ensemble-based methods are quite computationally expensive. Trust Score \citep{jiang2018trust} proposes a score which is a fixed, non-learnable function of the neighborhood of a sample, and hence suffers from limited functional space.
\citet{corbiere2019addressing} proposes a regression method to fit the ground truth label's corresponding softmax score and uses it as a proxy for trustworthiness. However, this relies on the assumption that the base classifier is always over-confident, which is not always the case (e.g. graph neural networks were found to be under-confident in \citet{Wang2021BeCT}).
\citet{Malinin2018PredictiveUE, Malinin2020EnsembleDD, Sensoy2018EvidentialDL, Charpentier2020PosteriorNU} assume the classifier outputs are sampled from a latent Dirichlet distribution and treat low-likelihood samples as misclassified samples. These methods typically involve modified architectures that need to be 
trained from scratch, and in some cases can involve the trade-off between classification accuracy and the performance of trustworthiness prediction. 
In contrast, our proposed \method keeps the base classifier intact and uses auxiliary information for simple estimation. 
Similarly, \cite{deng2022trust} extracts auxiliary information from self-supervised pretext tasks for the trustworthiness prediction.
In our work, we aim to measure trustworthiness by adaptively utilizing the classifier's predictive output and neighborhood information via a flexible mapping that combines the best of both worlds. 
\paragraph{Relations to Out-of-distribution Detection.}
Out-of-distribution (OoD) detection \citep{hendrycks17baseline, sastry2020detecting, ming2022posterior} are closely related to trustworthiness prediction but targeted at a different goal. OoD detection aims to measure the sample quality by identifying input data whose ground truth label is not covered by the label set of the training dataset, whereas trustworthiness prediction aims to measure the classifier's quality by identifying input data whose predicted label does not match its ground truth label. However, since they both detect abnormal behaviors, the assumption used for one task can be applied to the other with some modification. For example, our neighbor-homophily assumption can also be extended to OoD detection by assuming that samples that are distant from neighbors in all classes are likely to be out-of-distribution data. On the other hand, methods for detecting OoD data, such as mahalanobis distance \cite{lee2018simple}, can also be adapted to our task.

\paragraph{Mislabel Detection} The goal of mislabel detection is to identify data whose labelled class differs from the underlying ground truth class. A majority of research in this field leverages training dynamics for differentiating correctly labelled and mislabelled samples, such as the dynamics of logit in AUM~\cite{pleiss2020identifying} and the loss distribution in DY-Bootstrape\cite{arazo2019unsupervised}. Our approach falls into a different line of direction~\cite{https://doi.org/10.48550/arxiv.1911.00068,zhang2021learning} that employs pre-trained classifiers. Confident Learning~\cite{https://doi.org/10.48550/arxiv.1911.00068}, for example, estimates a noise transition matrix based on the softmax output. In addition to the softmax output, we also consider a sample's local neighborhood, which allows us to infer individual samples more accurately compared to confident learning.

\section{Conclusions and Discussion}
\label{sec:concl}

\paragraph{Conclusion}
Knowing when to trust a classifier is essential for safe deployment of present machine learning algorithms. To solve the problem, we devise a model-agnostic post-hoc trustworthiness prediction algorithm \method which leverages information from the neighborhood and the classifier to predict the trustworthiness of predictions given by a classifier. By theoretically demonstrating that \method is a generalized one-hop GCN aggregating information from the neighborhood and the sample itself, we provide a better understanding of how our approach works. The working mechanism is also revealed by empirical studies, which show that our approach can utilize the classifier output and neighborhood information in a complementary manner, and capture diverse similarities within different neighborhoods.
On several tabular and image benchmarks, the effectiveness of \method is empirically validated via 
comparison with other state-of-the-art methods and ablation studies. 
Of independent interest, an extension of \method to mislabel detection 
is also introduced with a noise-robust coverage guarantee for bounding the false negative predictions. 

\paragraph{Discussion}
The current study assesses a classifier's trustworthiness by utilizing each sample's neighborhood, comparing each sample's prediction to its neighbors' ground truth labels. Besides that, it would be interesting to explore alternative information associated with the trustworthiness of the classifier. The model explanation generated by explainable approaches, for example, can be used to determine trustworthiness. A prediction with implausible explaining logic is likely wrong. 
Moreover, going beyond unstructured data, we believe \method is also promising for measuring trustworthiness of graph-structured data, which can be an interesting and nontrivial extension to this work.

\subsubsection*{Acknowledgments}
This work was supported in part by NUS ODPRT Grant R252-000-A81-133.


\newpage
\appendix

\section{Proof of Theorem 1}
\label{app:gcn-proof}
Before we show the proof, we demonstrate how the column concatenation operator $\cdot\mathbin\Vert\cdot$ operates through an example: the column concatenation of two column vectors $(1, 2)^T$ and $(3, 4)^T$ yields
\begin{equation}
[{(1, 2)}^T \mathbin\Vert {(3, 4)}^T] = {(1, 2, 3, 4)}^T
\end{equation}

\begin{theorem}[One-hop GCN Equivalence]
Provided that $\mathbf{W}_h$ exhibits a block diagonal structure:
\begin{equation*}
    \mathbf{W}_h = {\frac{1}{K}}\Bigl[ I_{C \times C} \otimes \boldsymbol{1}^T \Bigr]  \, \mathrm{with} \, 
    \boldsymbol{1}^T = \underbrace{[1, 1, \cdots, 1]}_{K\, 1\text{'s}},
\end{equation*}
where $\otimes$ denotes the Kronecker product, and that $\mathbf{W}_p = I_{C \times C}$, 
\method operates as a one-hop Graph Convolutional Network with the lifted node features 
$[\mathbf{y}^{(i)} \mathbin\Vert \mathbf{0}] \in \mathbb{R}^{2C}$ for $y^{(i)} \in \mathcal{Y}_{tr}$ and 
$[\mathbf{0} \mathbin\Vert \mathbf{p}^{(i)}] \in \mathbb{R}^{2C}$ for $\mathbf{p}^{(i)} = \mathcal{F}(\mathbf{x}^{(i)})$ with
$\mathbf{x}^{(i)} \in \mathcal{X}_{val} \cup \mathcal{X}_{ts}$, 
and the adjacency matrix $\mathbf{A}$ induced by a predefined kernel
$\mathcal{K}_{f}$ (e.g. Laplacian kernel). 
\end{theorem}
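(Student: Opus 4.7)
My plan is to match \method's internal representation term-by-term against the message-passing update of a one-hop GCN applied to a carefully constructed graph whose node features are the lifted vectors in the statement.

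First, I would unpack $\mathbf{W}_h \mathbf{h}$ under the hypothesized block-diagonal structure. Because $\mathbf{W}_h = \tfrac{1}{K}[I_{C\times C} \otimes \boldsymbol{1}^T]$ and $\mathbf{h} = [\mathbf{s}_1\mathbin\Vert \cdots \mathbin\Vert \mathbf{s}_C]$ with $\mathbf{s}_c = (s_{c1},\ldots,s_{cK})^T$, the $c$-th coordinate of $\mathbf{W}_h \mathbf{h}$ simplifies to the averaged class-wise similarity $\tfrac{1}{K}\sum_{k=1}^{K} s_{ck}$. Rewriting $s_{ck} = \mathcal{K}_f(\mathbf{x},\mathbf{n}_{ck})$ and summing over all training samples with a restriction to the $K$ nearest per class, this is exactly $\sum_{i:\, \mathbf{x}^{(i)}\in\mathcal{N}(\mathbf{x})} A_{\mathbf{x},i}\, \mathbf{y}^{(i)}_c$, where $A_{\mathbf{x},i} := \tfrac{1}{K}\mathcal{K}_f(\mathbf{x},\mathbf{x}^{(i)})\cdot \mathbb{1}[\mathbf{x}^{(i)} \text{ is among }\mathbf{x}\text{'s class-}y^{(i)}\text{ } K\text{-NN}]$ and $\mathbf{y}^{(i)}$ is the one-hot label. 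Hence the entire vector $\mathbf{W}_h \mathbf{h}$ coincides with $\sum_{i} A_{\mathbf{x},i}\, \mathbf{y}^{(i)}$.

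Second, I would construct the GCN graph explicitly. The node set consists of (a) each training point $\mathbf{x}^{(i)}\in\mathcal{X}_{tr}$ with lifted feature $[\mathbf{y}^{(i)}\mathbin\Vert \mathbf{0}]\in\mathbb{R}^{2C}$, and (b) each query point $\mathbf{x}\in\mathcal{X}_{val}\cup\mathcal{X}_{ts}$ with lifted feature $[\mathbf{0}\mathbin\Vert \mathbf{p}]\in\mathbb{R}^{2C}$ where $\mathbf{p}=\mathcal{F}(\mathbf{x})$. The adjacency matrix $\mathbf{A}$ is the one induced by the kernel $\mathcal{K}_f$ restricted to the class-wise $K$-NN edges (the entries $A_{\mathbf{x},i}$ defined above) together with a self-loop $A_{\mathbf{x},\mathbf{x}}=1$ at each query node. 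Applying one round of neighborhood aggregation at a query node yields a $2C$-vector whose first $C$ coordinates are $\sum_{i} A_{\mathbf{x},i}\mathbf{y}^{(i)}$ (since only training neighbors contribute to the first block) and whose last $C$ coordinates are $\mathbf{p}$ (the self-loop contribution, since only the query's own lifted feature has a nonzero second block). By the first step, this vector is precisely $[\mathbf{W}_h\mathbf{h}\mathbin\Vert \mathbf{p}]=[\mathbf{W}_h\mathbf{h}\mathbin\Vert \mathbf{W}_p\mathbf{p}]$ under the constraint $\mathbf{W}_p=I_{C\times C}$.

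Third, after this aggregation a one-hop GCN applies a learnable linear map followed by a nonlinearity (and, in the classification head, a softmax). The \method update in Eq.~(\ref{eq:ti}) is $\mathbf{t}=\operatorname{softmax}(\mathbf{W}^T\sigma([\mathbf{W}_h\mathbf{h}\mathbin\Vert \mathbf{W}_p\mathbf{p}]))$, which matches this GCN output (the role of the nonlinearity and the final classification weight $\mathbf{W}$ is identical). Therefore, under the stated constraints \method reduces to a one-hop GCN on the constructed graph with the prescribed lifted features, finishing the proof.

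The main obstacle I anticipate is the careful bookkeeping in the second step: one must specify the adjacency so that (i) a query node's class-$c$ training neighbors contribute only to the first $C$ coordinates via the one-hot labels, (ii) the query's own classifier vector enters only through the second $C$ coordinates via the self-loop, and (iii) the $\tfrac{1}{K}$ normalization in $\mathbf{W}_h$ matches the usual GCN symmetric/row normalization on the restricted $K$-NN edges. Making the adjacency explicit and verifying that the resulting message-passing output agrees coordinate-wise with $[\mathbf{W}_h\mathbf{h}\mathbin\Vert \mathbf{W}_p\mathbf{p}]$ is the substantive part; the rest is routine algebra.
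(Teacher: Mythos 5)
Your proposal is correct and follows essentially the same route as the paper: it identifies $\mathbf{W}_h\mathbf{h}$ with a kernel-weighted sum of one-hot neighbor labels over the class-wise $K$-NN edges (with the $\tfrac{1}{K}$ factor absorbed into the adjacency entries), treats the query's own lifted feature $[\mathbf{0}\mathbin\Vert\mathbf{p}]$ as the self-loop term, and matches the resulting $[\mathbf{W}_h\mathbf{h}\mathbin\Vert\mathbf{W}_p\mathbf{p}]$ against the one-hop update $\mathbf{Z}(I+\mathbf{A})$ on the bipartite train--query graph followed by the nonlinearity and linear head. The paper merely packages your node-wise bookkeeping into the explicit block matrices $\mathbf{Z}$ and $\mathbf{A}$; there is no substantive difference.
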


\begin{proof}
    Recall that $\mathcal{X}_{tr}$ denotes the training split containing $N_{tr}$ samples and 
    that $\mathcal{X}_{val}$ denotes the validation split containing $N_{val}$ samples.
    For any $\mathbf{x}^{(i)} \in \mathcal{X}_{val} \cup \mathcal{X}_{ts}$, according to Algorithm 1, 
    we obtain its corresponding $\mathbf{h}^{(i)}$ and $\mathbf{p}^{(i)}$. 
    Note that $\mathbf{W}_h$ exhibits a block diagonal structure and $\mathbf{W}_p = I_{C \times C}$, we can have $\mathbf{W}_p \mathbf{p}^{(i)} =  \mathbf{p}^{(i)}$ and 
    $\mathbf{W}_h \mathbf{h}^{(i)} = \frac{1}{K} [\sum_k s_{1k}^{(i)}, ..., \sum_k s_{Ck}^{(i)}]^T := \mathbf{s}^{(i)}$. 
    Then, 
    the updated node feature $\mathbf{z}^{(i)'}$ can be decomposed into 
    two additive terms:
    \begin{equation}
        \begin{split}
            \mathbf{z}^{(i)'} 
                 = \left[\mathbf{W}_h \mathbf{h}^{(i)} \mathbin\Vert \mathbf{W}_p \mathbf{p}^{(i)}\right] 
                 =  \begin{bmatrix}
                        \boldsymbol{0} \\
                        \mathbf{p}^{(i)}
                        \end{bmatrix} + 
                    \begin{bmatrix}
                        \mathbf{s}^{(i)} \\
                        \boldsymbol{0}
                    \end{bmatrix}
        \end{split}
    \end{equation}
    
    Now define the original node feature $\mathbf{z}^{(i)} := [\mathbf{0} \mathbin\Vert \mathbf{p}^{(i)}] \in \mathbb{R}^{2C}$ for $\mathbf{x}^{(i)} \in \mathcal{X}_{val} \cup \mathcal{X}_{ts}$. Then, the updated node feature $\mathbf{z}^{(i)'}$ can be written into a linear combination of $\mathbf{z}^{(i)}$ and all its neighbor node features:
    \begin{equation}
        \label{equ:nodefeat}
        \begin{split}
            \mathbf{z}^{(i)'} 
                &=  \mathbf{z}^{(i)} + \sum_{c=1}^C \frac{1}{K} \sum_k {s_{ck}^{(i)}} \cdot [\mathbf{e}_c \mathbin\Vert \boldsymbol{0}] \\
                &=  \mathbf{z}^{(i)} + \sum_{\mathbf{n} \in \mathcal{N}^{(i)}} \sum_{c=1}^C {\mathbb{I}\{\mathbf{n} \in \mathcal{X}_c\}} \frac{\mathcal{K}_{f}(\mathbf{x}^{(i)}, \mathbf{n})}{K} \cdot [\mathbf{e}_c \mathbin\Vert \boldsymbol{0}]
        \end{split}
    \end{equation}
    where $\mathcal{N}^{(i)}$ denotes the neighborhood of the sample $\mathbf{x}^{(i)}$ that 
    contains class-wise $K$-nearest neighbors of $\mathbf{x}^{(i)}$ and 
    $\{\mathbf{e}_c | c = 1, ..., C\}$ denotes the standard basis for $\mathbb{R}^C$.
    Then, using matrix notation, we recognize Eq.~\eqref{equ:nodefeat} as 
    \begin{equation}
    \label{eq:gcn}
    \mathbf{Z}' =  \mathbf{Z} + \mathbf{Z}\mathbf{A} = \mathbf{Z}( I + \mathbf{A}),
    \end{equation}
    where $\mathbf{Z}  \in \mathbb{R}^{2C \times (N_{tr} + N_{val})} $ is given by
    \begin{equation}
    \mathbf{Z} = 
        \begin{bmatrix}
            | & \cdots & | & | & \cdots & | \\
            \mathbf{y}^{(1)} & \cdots & \mathbf{y}^{(N_{tr})} & \boldsymbol{0} & \cdots & \boldsymbol{0} \\
            | & \cdots & | & | & \cdots & | \\
            | & \cdots & | & | & \cdots & | \\
            \boldsymbol{0} & \cdots & \boldsymbol{0} & \mathbf{p}^{(1)} & \cdots & \mathbf{p}^{(N_{val})} \\
            | & \cdots & | & | & \cdots & |
        \end{bmatrix}
    \end{equation}
    and $\mathbf{A} \in \mathbb{R}^{(N_{tr} + N_{val}) \times (N_{tr} + N_{val})} $ is given by
    \begin{equation}
    \mathbf{A} = 
        \begin{bmatrix}
            \mathbf{O}_{N_{tr} \times N_{tr}} & \mathbf{A}^{tr\text{-}val} \\
            \mathbf{A}^{val\text{-}tr} & \mathbf{O}_{N_{val} \times N_{val}}
        \end{bmatrix} 
    \end{equation}
    Here, $\mathbf{A}$ is composed of four sub-matrices $\mathbf{O}_{N_{tr} \times N_{tr}}$, $\mathbf{A}^{tr\text{-}val}$, $\mathbf{A}^{val\text{-}tr}$ and $\mathbf{O}_{N_{val} \times N_{val}}$. 
    Among them, $\mathbf{A}^{tr\text{-}val}$, $\mathbf{A}^{val\text{-}tr}$ are affinity matrices 
    that characterize the similarity between each pair of samples (one from $\mathcal{X}_{tr}$ and the other from $\mathcal{X}_{val}$, or the other way around). 
    That is, each entry of $\mathbf{A}$ is given by
    \begin{equation}
    \mathbf{A}_{i,j} =
        \begin{cases}
            0, \quad & \text{if } i \le N_{tr},  j \le N_{tr}; \\
            0, \quad & \text{if } i > N_{tr},  j > N_{tr}; \\
            \frac{\mathcal{K}_f(\mathbf{x}^{(i)}, \mathbf{x}^{(j)})}{K} \cdot \mathbb{I}(\mathbf{x}^{(j)} \in \mathcal{N}^{(i)}), \quad & \text{otherwise}
        \end{cases}
    \end{equation}
    where $\mathcal{N}^{(i)}$ denotes the neighborhood of the sample $\mathbf{x}^{(i)}$ that contains class-wise $K$-nearest neighbors of $\mathbf{x}^{(i)}$.
    
    Therefore, using matrix notation, we obtain a one-hop update Eq.~(\ref{eq:gcn}) in GCNs, 
    followed by nonlinearity and a linear transform, which concludes the proof. Such one-hop GCNs induce a bipartite graph 
    $\mathcal{G} = (L \cup R, E)$, where the nodeset $L$ corresponds to the training samples and $R$ corresponds to the validation samples;  $E$ is the edge set. The adjacency matrix of $\mathcal{G}$ is the matrix $\mathbf{A}$ as defined above.
    The node feature matrix is denoted as $\mathbf{Z}$.
\end{proof}

\section{Proof of Theorem 2}
\label{app:mislabel-proof}


Before the proof, we introduce the following notations.
We use $\{\cdots\}$ to denote a set and $(\cdots)$ to denote an ordered tuple. 
For example, sorting a set $\{3, 1, 4, 2\}$ in decreasing order yields an ordered tuple $(4, 3, 2, 1)$.

\begin{theorem}[\methodcmd Coverage Guarantee]
    \label{app:theo:convergency}
    For any given confidence level $\alpha \in (\frac{1}{N+1}, 1)$,
    with probability at least $1-\alpha$ over the random choice of any correctly labeled data point $(\Tilde{\mathbf{x}}, \Tilde{y})$, we have     $$ \Tilde{r} > \tau_\alpha, $$
    where $\Tilde{r}$ is the predicted reliability score of $\Tilde{\mathbf{x}}$ and 
    $\tau_\alpha$ is defined in \equationautorefname{~(\ref{eq:app-metcmd})}.
\end{theorem}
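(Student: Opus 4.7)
My plan is to reduce the noisy-validation coverage problem to the standard clean-data conformal prediction bound through a virtual partitioning argument. I would split the validation set into two conceptual subsets: $\mathcal{V}_c$ consisting of the $N_c$ correctly labeled samples and $\mathcal{V}_m$ consisting of the $N_m \le Np$ mislabeled ones. The algorithm never needs to recover this partition; it is used only inside the analysis. Since the test point $(\Tilde{\mathbf{x}}, \Tilde{y})$ is drawn i.i.d.\ from the clean distribution, the reliability score $\Tilde{r}$ together with the $N_c$ clean validation scores forms an exchangeable family of size $N_c + 1$.

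The heart of the argument is a combinatorial bridge that relates the \emph{noisy} empirical threshold $\tau_\alpha$ to a purely \emph{clean} order statistic. Let $r^c_{(j)}$ denote the $j$-th largest score inside $\mathcal{V}_c$. Among the top $B_\alpha$ entries of the full sorted list, at most $N_m$ can come from $\mathcal{V}_m$, so at least $B_\alpha - N_m$ come from $\mathcal{V}_c$; a short exchange argument then shows that the top $B_\alpha - N_m$ clean scores must all lie in the top $B_\alpha$ overall, yielding
\begin{equation*}
\tau_\alpha \;=\; r_{(B_\alpha)} \;\le\; r^c_{(B_\alpha - N_m)}.
\end{equation*}
Consequently $\{\Tilde{r} > r^c_{(B_\alpha - N_m)}\} \subseteq \{\Tilde{r} > \tau_\alpha\}$, so it suffices to lower-bound the probability of the former event. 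Applying the standard conformal exchangeability lemma to the $N_c + 1$ clean scores (continuity eliminates ties almost surely) gives
\begin{equation*}
\Pr\bigl(\Tilde{r} > r^c_{(B_\alpha - N_m)}\bigr) \;=\; \frac{B_\alpha - N_m}{N_c + 1}.
\end{equation*}
Substituting $N_m \le Np$, $N_c \ge N(1-p)$, and $B_\alpha = \lceil(N+1)(1-\alpha) + \alpha N p\rceil$, a one-line algebraic check (expanding $(1-\alpha)(N_c + 1) + N_m$) shows this ratio is at least $1 - \alpha$, delivering the claimed coverage. The hypothesis $\alpha > 1/(N+1)$ is needed only to ensure $B_\alpha - N_m \ge 1$ so that $r^c_{(B_\alpha - N_m)}$ is well-defined and the bound is non-vacuous.

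The main obstacle I anticipate is the initial reduction step: explaining why the noisy threshold $\tau_\alpha$, which the algorithm computes from the \emph{observed} (label-noisy) scores, can be controlled by a clean-only quantile whose very definition requires knowing the partition. The counting inequality above is the cleanest workaround, but it implicitly treats $N_m$ as deterministically upper bounded by $Np$; if the setting is genuinely random (so $N_m$ is a binomial with mean $Np$), a concentration step on $N_m$ would be needed to convert the bound into a fully rigorous statement, and I would expect the appendix proof to either assume $N_m = Np$ exactly or to invoke such a tail bound for this reason.
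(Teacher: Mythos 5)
Your proposal is correct and follows essentially the same route as the paper's proof: both reduce the noisy threshold $\tau_\alpha$ to a clean-only order statistic by counting how many mislabeled scores can sit above it, apply the exchangeability rank argument to the test score together with the correctly labeled validation scores, and then verify the choice of $B_\alpha$ with the same algebra. The paper indeed treats the number of mislabeled points as the deterministic quantity $M=\lfloor Np\rfloor$ (conditioning on the realized count $m\le M$ above the threshold rather than using your worst-case deterministic bound), so no concentration step on $N_m$ is needed or invoked, exactly as you anticipated.
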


\begin{proof}
Given a noisy-labeled dataset $\{(\mathbf{x}^{(i)},y^{(i)}) \}_{i=1}^{N}$ with the mislabelling rate $p$ and 
with their corresponding reliability score $\{r^{(i)} \}_{i=1}^{N}$ estimated by \methodcmd, 
we assume that the correctly labeled datapoints obey some distribution $\mathcal{U}$, i.e., 
$(\mathbf{x},y) {\overset{iid}{\sim}} \mathcal{U}$ and the mislabeled datapoints obey 
$\mathcal{V}$, $(\mathbf{x},y) {\overset{iid}{\sim}} \mathcal{V}$. 
Given any correctly labeled test sample $(\Tilde{\mathbf{x}}, \Tilde{y}) \sim \mathcal{U} $, 
Algorithm 2 returns its reliability score $\Tilde{r}$ . 
Then, we sort the reliability scores of the training dataset combined with the test sample $\Tilde{\mathbf{x}}$
in non-increasing order as $(r_{(i)} )_{i=1}^{N+1}$.

Note that there are $M= \left\lfloor Np \right\rfloor$ mislabeled datapoints. 
Let $(r_{(i)}^{\mathcal{U}})_{i=1}^{N-M+1}$ denote the reliability score of correctly labeled samples in non-increasing order. 
According to Algorithm 2, the threshold $\tau_\alpha$ is set to the $B_\alpha$-th largest element, i.e., 
\begin{equation}
    \label{eq:app-metcmd}
    \tau_\alpha = r_{(B_\alpha)}, \text{ where } B_\alpha = \left\lceil(N+1)(1- \alpha)  +\alpha N p \right\rceil.
\end{equation}

Note that $B_\alpha$ should be smaller than or equal to $N$, which yields $\alpha > \frac{1}{N+1}$.

Suppose there exist $m \le M$ mislabeled datapoints whose reliability scores $r^{\mathcal{V}} \ge r_{(B_{\alpha})}$. Then, the probability that the reliability score of the test sample $\Tilde{\mathbf{x}}$ is less than or equal to the threshold $\tau_\alpha$ is given as follows,
\begin{align} \label{eq1}
    \mathbb{P}\left(\Tilde{r} \leq \tau_\alpha \right) 
    &= \mathbb{P}\left(\Tilde{r} \leq r_{(B_\alpha)} \right) \\
    &=  \mathbb{P}\left(\Tilde{r} \leq r_{(B_\alpha-m)}^{\mathcal{U}} \right) \\
    & \leq \frac{N+1-m - (B_\alpha-m)}{N+1-m} \\
    & = \frac{N+1- B_\alpha}{N+1-m}  \\
    & \leq \frac{N+1-B_\alpha}{N+1-M}  \\
    &=  \frac{N+1- \left\lceil(N+1)(1- \alpha)  +\alpha N p  \right\rceil }{N+1-M} \\
    &\leq \frac{N+1- (N+1)(1- \alpha)  - \alpha N p  }{N+1-M} \\
    & = \frac{(N+1)\alpha  - \alpha N p  }{N+1-M} \\
    & \leq \frac{(N+1)\alpha  - \alpha M  }{N+1-M} \\
    & = \alpha
\end{align}
Therefore,
\begin{equation}\label{eq2}
\begin{split}
    \mathbb{P}\left(\Tilde{r} > r_{(B_\alpha)} \right) \geq 1 - \alpha.
\end{split}  
\end{equation}

Note that the inequality (A.11) comes from the fact that $\mathbf{x}$ is an i.i.d. sample
with the other correctly labeled datapoints, i.e. those from $\mathcal{U}$~\citep{balasubramanian2014conformal}.
\end{proof}

\section{Mechanism of \method}
\label{app:mechanism}

\begin{figure}[!t]
    \centering    
    \includegraphics[width=1.0\linewidth]{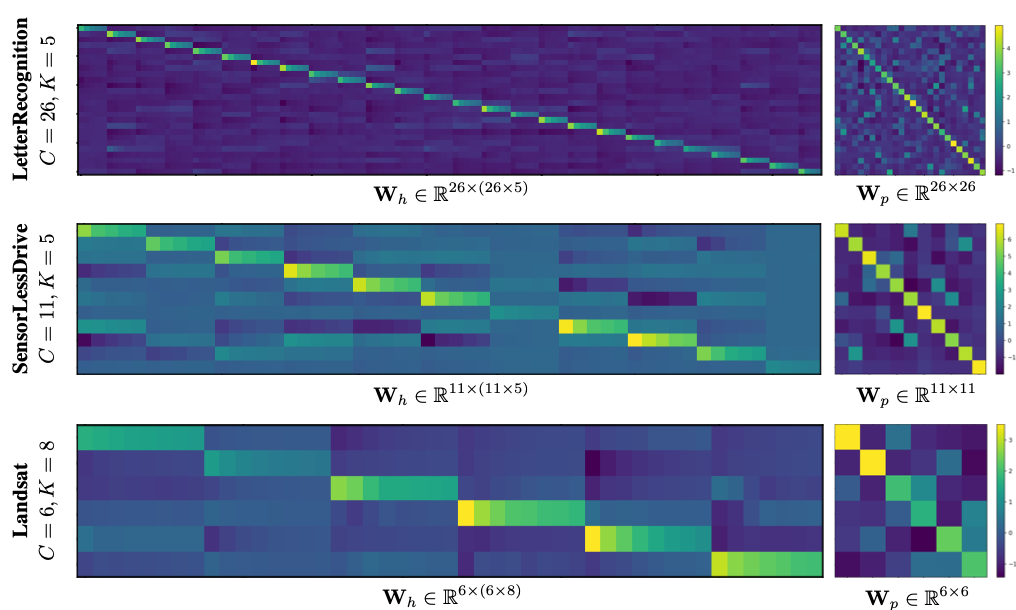}
    \caption{\textbf{Visualization of weight matrices $\mathbf{W}_h$ and $\mathbf{W}_p$ learned by \method}. Brighter colors indicate larger values. $\mathbf{W}_h$ and $\mathbf{W}_p$ indeed exhibit (blockwise) diagonal structures, which verifies the hypothesis of Theorem 1 up to multiplicative factors that can be absorbed by $\mathbf{W}$.}
    \label{fig:W1W2} 
\end{figure}

\method is built upon the idea that the trustworthiness of a classifier's prediction for a sample are highly dependent on two factors: the neighborhood of the sample and the predictive output of the classifier. Therefore, we constructed a model that leverages these two factors for trustworthiness prediction. But how do these two pieces of information interact with each other in determining the trustworthiness score? Curious about this question, we investigate the mechanism empirically. The empirical studies show that the two factors act in a complementary manner: when the sample’s neighborhood information is not accurate or useful, it relies more on the classifier output; and vice versa.
We confirm this point via closer scrutiny on \figureautorefname{~\ref{fig:W1W2}}, where those brighter diagonal blocks in $\mathbf{W}_h$ are empirically associated with their corresponding darker diagonal entries in $\mathbf{W}_p$, and vice versa. To further confirm it quantitatively, we calculate the Pearson correlation coefficient $\rho$ between the mean of the diagonal blocks in $\mathbf{W}_h$ and the activation of their corresponding diagonal entries in $\mathbf{W}_p$. The results show a relatively strong negative correlation (e.g. $\rho=-0.90$ on Landsat), which sheds light on the complementary mechanism of \method. 

\begin{figure}[!t]
    \centering    
    \includegraphics[width=0.7\linewidth]{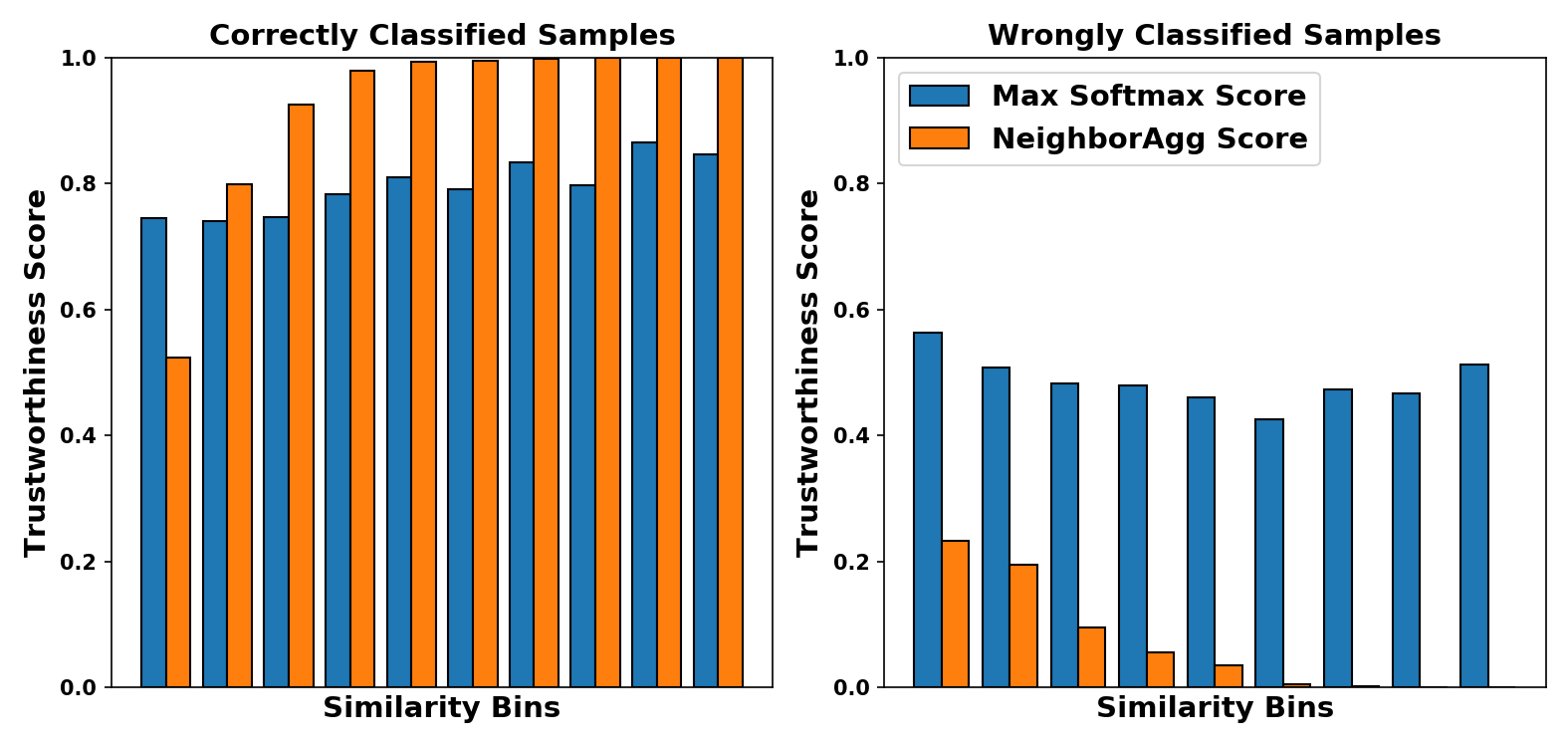}
    \caption{\textbf{Visualization of softmax scores and trustworthiness scores} in different similarity bins (i.e., local density). Samples are split into two categories: correctly classified and incorrectly classified. For every sample $\mathbf{x}$, we take average similarity to its K neighbors of predicted class to characterize the intra-class local density.}
    \label{fig:intuition-app} 
\end{figure}

In what follows, we empirically show how \method adjusts the confidence score of a given base classifier in a complementary manner through another perspective.

There are two cases of interest: 
\begin{compactitem}
    \item how \method reacts when a sample is correctly classified; 
    \item how \method reacts when a sample is misclassified.
\end{compactitem}

The experiment is carried out on the test set. For each sample $\mathbf{x}$ in the set (with the associated label y), we find its intra-class K nearest neighbors $n_{yk}$ ($k=1,...,K$), and compute the corresponding similarity $s_{ck}$. For each $\mathbf{x}$, we further take the mean of these similarities ${s_{ck}, k=1,...,K}$ as a measure that characterizes the intra-class local density of $\mathbf{x}$ in the feature space. On the other hand, we can obtain the sample's confidence score given by the base classifier and the trustworthiness score given by our \method, respectively. Then we plot the relation between the scores and the local density in \figureautorefname{~\ref{fig:intuition-app}}. 

Left-hand chart in \figureautorefname{~\ref{fig:intuition-app}} illustrates the first case where a sample is correctly classified. It reveals that when the sample resides in a lower density region (i.e. neighbors are scattered distantly and hence have low mean similarity values), our \method can adaptively lower the original confidence score by putting more weights on the neighborhood. When the sample resides in a higher density region (i.e. neighbors are concentrated locally and hence have high mean similarity values), our \method will increase the original confidence score by exploiting more information from the classifier output. In the second case (see right-hand chart of \figureautorefname{~\ref{fig:intuition-app}}) where a sample is misclassified, \method generally lowers the score regardless of the local density of $\mathbf{x}$ but to varying degrees. For example, in higher density regions, \method tends to dramatically lower the score than in lower density regions. Generally, \method can enhance the scores when the samples are correctly classified and when its neighbors are close, and can decrease the scores otherwise. These phenomena consistently agree with our proposed motivation.


\begin{figure}[t]
    \centering    
    \subfigure[APC vs. $K$]{\includegraphics[width=0.47\linewidth]{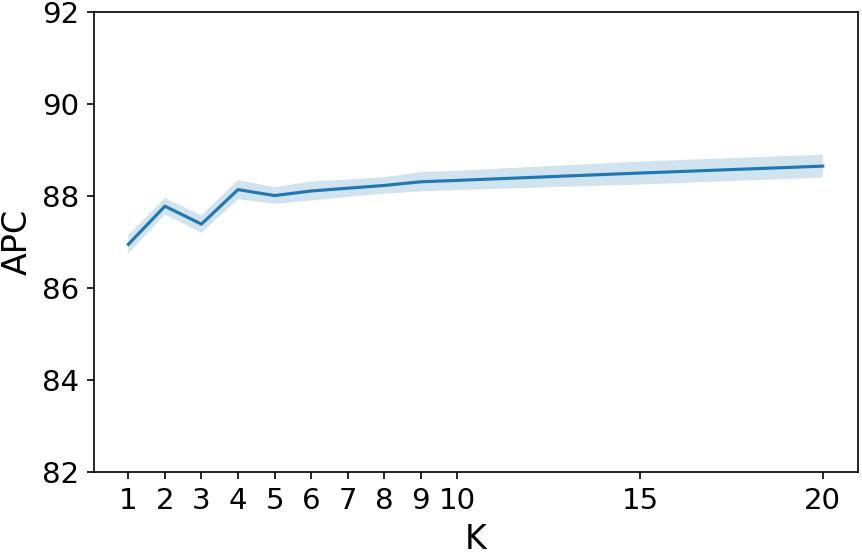}} \,
    \subfigure[APM vs. $K$]{\includegraphics[width=0.47\linewidth]{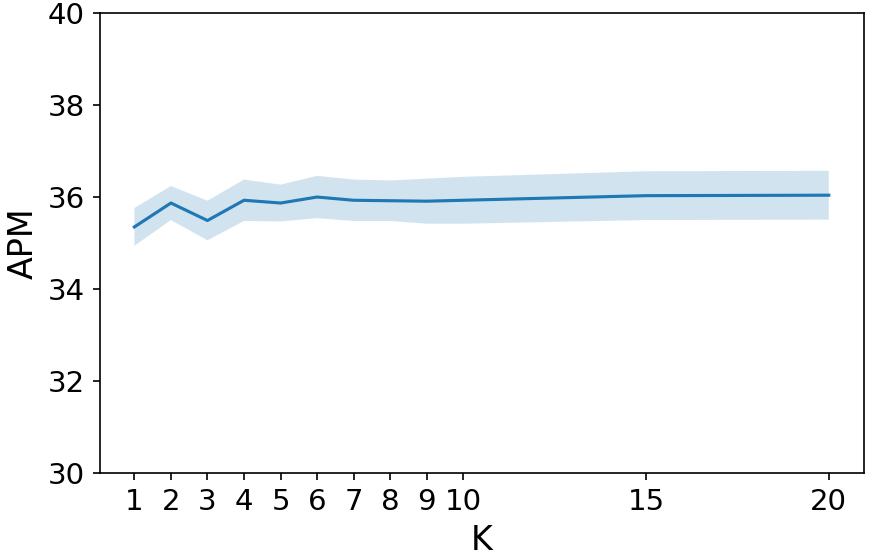}}
    \vspace{-0.1in}
    \caption{\textbf{Performance vs. \# of neighbors $K$ for \method}. 
         Our \method is insensitive to $K$ 
         within a wide range under different metrics. Experiments are performed on CardDefault using LR as the base classifier.}
    \label{fig:hyperpar_sen} 
\end{figure}

\section{Sensitivity to Hyperparameters}
\label{app:hyperparameter-sensitive}

We investigate the sensitivity to the hyperparameter $K$, i.e. the number of neighbors used in the neighborhood vector. We set $K$ from 1 to 20 with other parameters fixed. \figurename{~\ref{fig:hyperpar_sen}} shows the performance under APM and APC
over the range of $K$ with the shaded area representing its standard deviation. 
We observe that within the verified scope, more neighbors lead to better performance and when $K$ reaches a certain number the performance curve becomes smooth.
The result verifies that our proposed method is hyperparameter-insensitive when a moderate $K$ is chosen because it can automatically learn the intra-class topology. As more neighbors result in higher computational overheads,
we generally suggest to set $K = 5$ for tabular datasets and $K=10$ for image datasets to trade computational cost off against performance.





\section{Computational and Memory Cost}
\label{app:comp-cost}

This section discusses the computational and memory costs of inference time for our proposed method and other comparative approaches, including analytical time and space complexity as well as actual time consumption. 

\paragraph{Time Complexity}
We start by giving an intuitive example of the specific elapsed time of our method and then analyze how our method scales with \# of classes, dataset size and model size.
Empirically, taking CIFAR10 dataset as an example, we provide the inference time of our method for 10000 samples on 1 Tesla V100 SXM2 GPU with 32GB memory and Intel Xeon CPU E5-2698 v4 @ 2.20GHz with 504GB available memory. The neighbor search is executed on either GPU (faiss) or CPU (scikit-learn). To make the time cost more intuitive, we also provide the elapsed time of ResNet18 classifying 10000 samples as a baseline comparison. The results shows that GPU-accelerated neighbor search can be 3 times faster than the CPU version, and the CPU version spends comparable time with a common classification network.

\begin{table}[h]
    \centering
    \caption{
    Running time comparison for $10,000$ queries on GPU and CPU for CIFAR10 dataset.
    }
    \begin{tabular}{l|c|c|c} \toprule
                    & GPU (min)          & CPU (min)  & Baseline     \\  \hline
        10000 queries & 0.238 & 0.768 & 0.790 \\ 
        \bottomrule
    \end{tabular}
\label{tab:computation}
\end{table}

Analytically, the inference time complexity of our proposed method is determined by two components: neighbor search and aggregation, with the former shared solely by methods requiring neighborhood information, such as Trust Score and ours.

First, the aggregation operator is instantiated as a neural network, whose time complexity is determined by the number of layers of the network. This is similar to most methods in this field, such as TCP, suggesting that our method is similar to the others in the worst case. However, our aggregator is usually faster since it has fewer layers (e.g., one layer in our implementation), which is because it only deals with low-dimensional auxiliary information (e.g., similarities to neighbors) rather than high-dimensional feature embedding. In this sense, the aggregation process does not incur high computational cost.

Second, without placing a high priority on computational cost, we have chosen the simple and widely-used kd-trees from scikit-learn library for nearest neighbor search. It is worth mentioning that neighbor search is a well-studied problem for which numerous algorithms have been developed with different benefits. In terms of time complexity, there are many algorithms aiming to reduce the computing time, such as locality-sensitive hashing and randomized kd-trees~\cite{andoni2017nearest}, etc. 

For our implementation, the average time complexity of the neighbor search phase is $O(Kd \cdot \sum_{c=1}^C \log N_c)$ where $K, C, d$ are the \# of neighbors, classes and embedding dimension, and $N_c$ denotes the size of the kd-tree for class $c$ (the \# of samples for class $c$ in the training dataset).

\begin{itemize}
    \item First, a small value of $K$ is sufficient for most cases (e.g., $K = 20$ in our implementation) and it does not increase too much with $N$. Besides, because the kd-tree organizes similar samples together, after a single query we can simply search the surroundings around that point to find other nearest neighbors. Therefore, the implementation can be optimized by querying only once.
    \item Secondly, the size of dataset $N$, has limited impact due to the logarithmic scaling. For example, as $N$ increases from $0.05$M (CIFAR10) to $1.20$M (ImageNet), the query time only increases by around 1.3 times, which is fairly insignificant. 
    \item Thirdly, even though it grows linearly with $C$, each class's associated kd-tree can be executed in parallel to mitigate the effect, especially for tasks with a large value of $C$, such as $C=1,000$ in ImageNet, since they are independent of each other. Besides, the computational power of GPU and other hardware acceleration techniques can also be utilized, as will be demonstrated below.
\end{itemize}

Regarding model size, it does not affect scalability because the input of our technique is the classifier's output, which has been fixed to dimension $C$.

Therefore, we argue that computational cost is unlikely to become the bottleneck for the deployment of our approach.

\paragraph{Space complexity}

The memory cost of our method can also be decomposed into neighbor search and aggregation cost. As we have illustrated before, the aggregator typically has fewer layers and smaller input dimensions, hence incurring little memory cost. As for neighbor search, the memory cost is dependent on the size of all kd-trees $N$ and the feature dimension $d$, with an analytical space complexity of $O(Nd)$. This analysis applies to all methods requiring neighbor search, including Trust Score. Typically, the feature dimension $d$ for complex data, such as images, is approximately proportional to dimension of manifold, which is substantially lower than the original data dimension. For example, ImageNet with feature dimension $d=256$ consumes less than $300$MB of memory, which can be deployed on CPU or GPU. Moreover, memory-efficient kd-tree~\cite{choi2013improving,rafiee2019pruned} and training dataset compression techniques can also be used for reducing memory cost further.

\section{Graph Neural Network Framework}
\label{app:gnn-algo} 

Graph neural networks \cite{kipf2016semi, xu2018powerful} have been a topic of interest in recent times for their powerful modeling ability in aggregating information from different nodes. It also comes as a natural option for aggregating neighborhood information in our trustworthiness framework under our neighbor-homophily assumption. This motivates us to compare our method with GNNs.

In order to show the similarity and difference between our \method and GNNs, we also provide a detailed description 
about the framework of using GNNs for trustworthiness prediction, in particular, the process of building graph, training and inference.

Firstly, we construct a distance-weighted graph by instantiating its adjacent matrix. Assume that we have the feature matrix $X_{tr} \in R^{N \times d}$  of the training dataset where $N$ is the number of samples and $d$ is the feature dimension. We compute its pairwise distance based on certain similarity metrics. Different type of datasets are suitable for different similarity kernels and the most common one is Minkowski distance $d(\mathbf{A}, \mathbf{B})=(\sum_{i=1}^{n}\left(A_{i}-B_{i}\right)^{p})^{1/p}$ with $p=2$ (Euclidean distance). Then for every sample, we compute its $K$ nearest neighbors based on Euclidean distance and set edges between them. The weight of every edge is demonstrated by $e^{\frac{-d}{T}}$ where $d$ is the distance between two nodes and $T$ is the temperature parameter. 

Secondly, we construct every sample's node feature. For training dataset, the node feature matrix is $Y_{tr} \in R^{N \times C}$. In order to leverage the classifier outputs, we use softmax outputs of the base classifier as the node feature for the validation set and test set. It is worth noting that the test set's node feature can be constructed after the model has been trained. 

Thirdly, we choose suitable graph neural networks for aggregating these neighbors. Any graph neural networks that are suitable for inductive learning setting can be applied to this task. Without loss of generality, we consider the widely used graph convolutional neural network (GCN) as the backbone for neighborhood aggregation. 

The training objective is kept the same with our \method as shown in \equationautorefname{~\ref{eq:loss}}.

\section{Detailed Experiment Setup}
\label{app:exp-setup}

\paragraph{Tabular Dataset} Following \citet{jiang2018trust}, we split each dataset into $40\%$ for training, $10\%$ for validation and $50\%$ for test. 
Experiments are conducted based on three base classifiers, including logistic regression (LR)~\cite{peng2002introduction}, random forest (RF)~\cite{svetnik2003random} and multi-layer perceptrons (MLPs)~\cite{ruck1990multilayer}.  For RF, we use the default parameter setting suggested in the official implementation~\cite{sklearn_api}, while for LR, we set its maximum iterations to 5000. 
The MLP consists of two hidden layers (200, 70) with ReLU activation and Batch Normalization~\cite{ioffe2015batch}. 
The transform $f$ of the similarity kernel $\mathcal{K}_{f}$ is set to be identity mapping. Empirically, we find that learning with identity mapping yields sufficiently good performance. A more complex transform (e.g. a pretrained network) can be used to achieve higher performance. 
Models in comparison are implemented using their officially open-sourced codes.

\paragraph{Image Dataset}  For the image dataset, 20\% training data points are sampled using random seed 777 as the validation set. For FashionMNIST and MNIST datasets, small convolutional neural networks with two convolutional layers and two dense layers are used as base classifiers, while for CIFAR10, ResNet50 (the official implementation of Pytorch) is used. All these architectures are provided in the code. Similarity is computed using Euclidean distance of features given by the backbone of the base classifier. For the comparison of baseline models, we use their official implementation provided in their papers. Hyperparameters are kept the same across all comparison models.

For neighbor search, \kdtree~\citep{scheuermann1982multidimensional} and Faiss~\citep{JDH17} on CPU/GPU settings has been provided to efficiently search for neighbors of a sample in the large dataset.

\begin{table}[t]
    \small
    \centering 
    \begin{tabular}{l|c|c|l} \toprule 
    \textbf{Datasets}  & \textbf{\#Attributes} & \textbf{\#Classes} & \textbf{\#Instance}  \\ \hline 
         Adult                & 14             & 2              & $48,842$           \\ \hline 
         BankMarketing        & 63             & 2              & $45,211$           \\ \hline 
         CardDefault          & 23             & 2              & $30,000$           \\ \hline 
         Landsat              & 36             & 6              & $64,35$            \\ \hline 
         LetterRecognition    & 16             & 26             & $20,000$           \\ \hline 
         MagicGamma           & 10             & 2              & $19,020$         \\ \hline 
         SensorLessDrive      & 48             & 11             & $58,509$          
        \\ \bottomrule 
    \end{tabular}
    \caption{Dataset Statistics.}
    \label{tab:data_info} 
\end{table}

\section{More Results on Ablation Study}
\label{app:ablation-app}

In this section, we show extensive ablation study results in Figure~\ref{fig:ablation-app} to demonstrate the effectiveness of each component and 
the adaptiveness of the \emph{learnable} weights in \method. 

We compare against its variants \predonly and \neionly, as well as Trust Score.
\begin{compactitem}
    \item{\neionly:} solely takes neighborhood vectors as input,
    \item{\predonly:} solely inputs classifier output vectors.
    
\end{compactitem} 


\figurename{~\ref{fig:ablation}} demonstrates the comparison results of ablation study on seven tabular datasets across every base classifiers and every metrics. We note that our \method (represented by leftmost orange bars of every group) consistently outperforms \predonly and \neionly across almost all datasets, indicating that both vectors make non-negligible contributions to the final \trustscore,  
and that considering either of them alone is insufficient. 
It supports the working mechansim that the neighborhood and the predictive output of the classifier complement each other in determining the \trustscore.

\begin{figure*}[thb]
    \centering    
    \subfigure[Comparison for LR base classifiers under the AUC metric.\label{app:fig:abl_1}]{
         \includegraphics[width=0.48\linewidth]{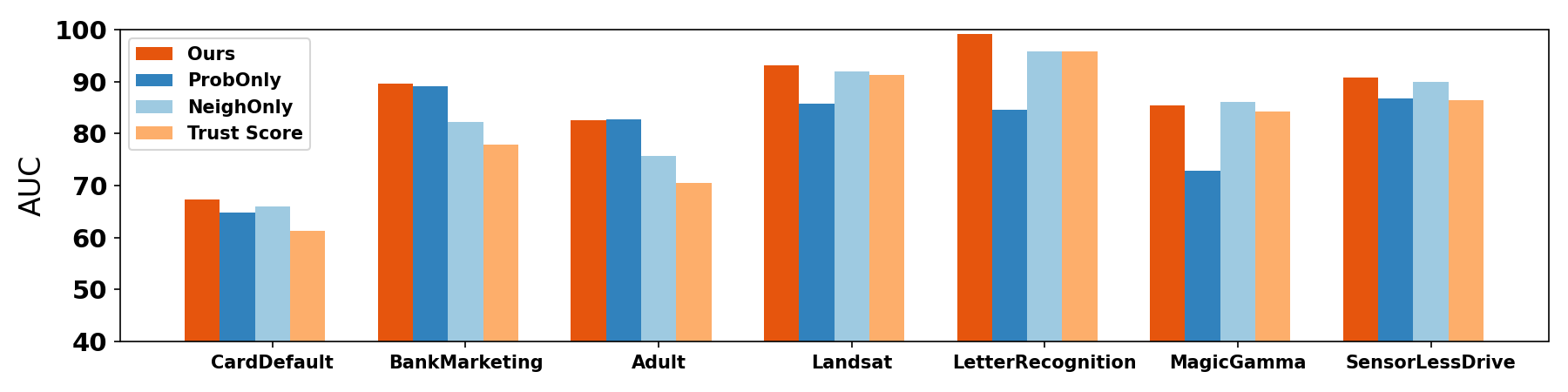} }  \,
    \subfigure[Comparison for LR base classifiers under the APC metric. \label{app:fig:abl_2}]{
         \includegraphics[width=0.48\linewidth]{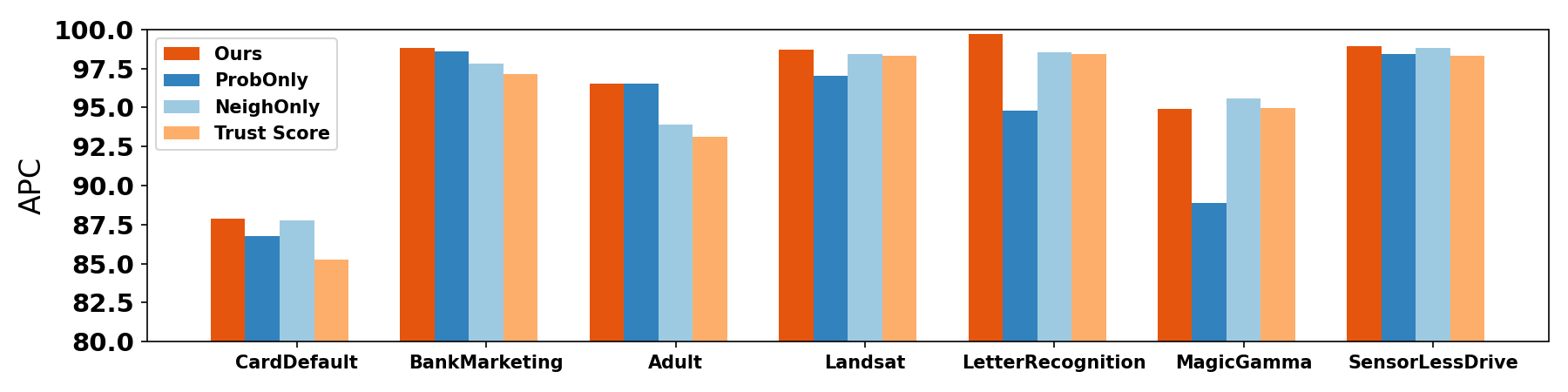} }  \\
         
    \subfigure[Comparison for RF base classifiers under the APC metric.\label{app:fig:abl_1}]{
         \includegraphics[width=0.48\linewidth]{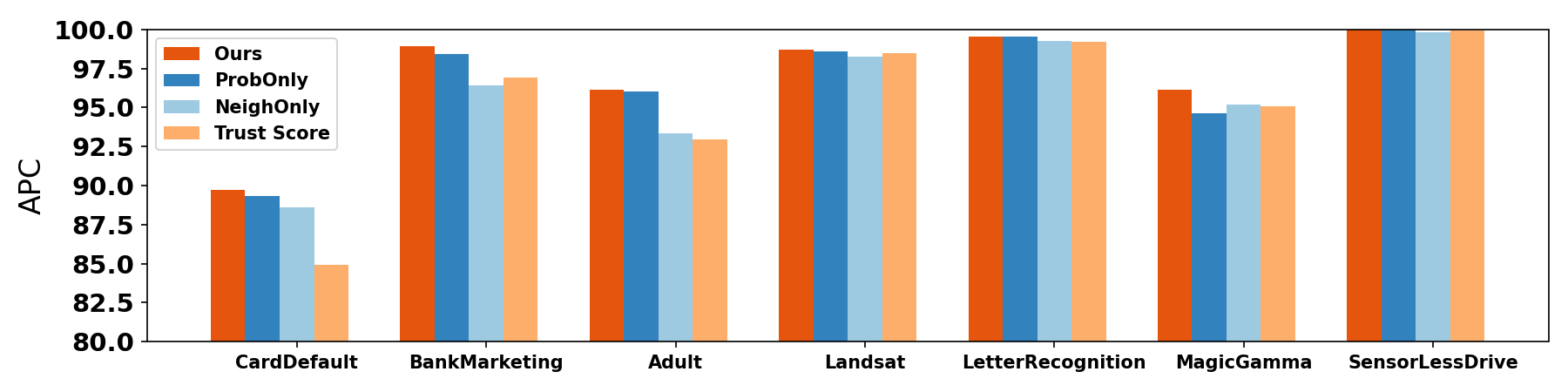} }  \,
    \subfigure[Comparison for RF base classifiers under the APM metric.\label{app:fig:abl_1}]{
         \includegraphics[width=0.48\linewidth]{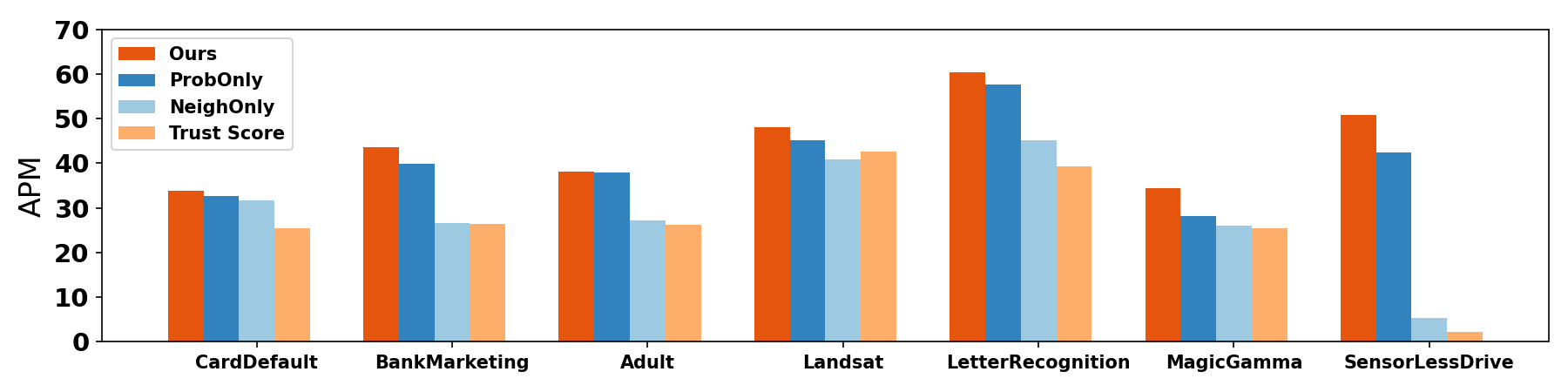} } \\
         
    \subfigure[Comparison for MLP base classifiers under the AUC metric.\label{app:fig:abl_1}]{
         \includegraphics[width=0.48\linewidth]{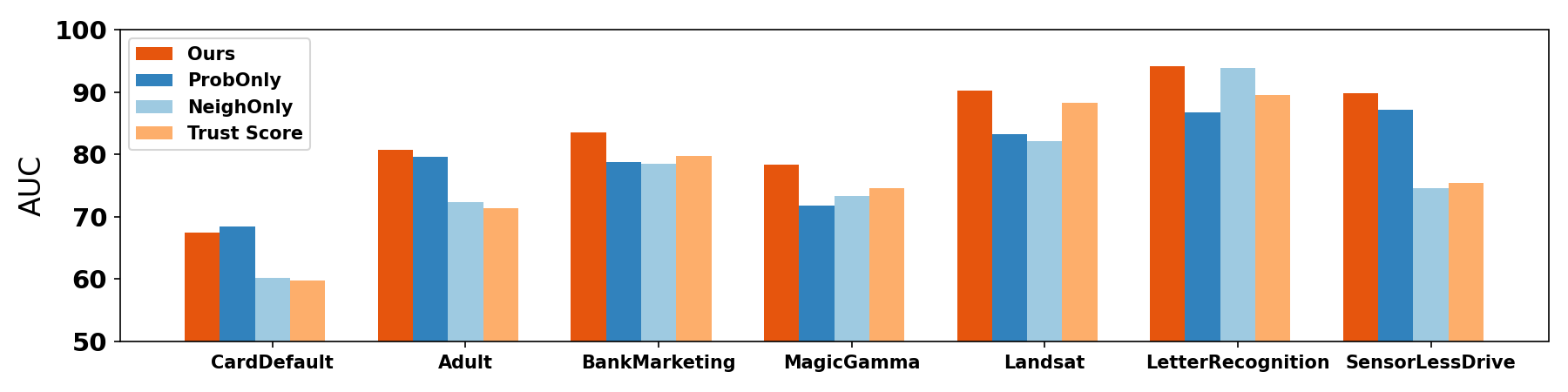} }  \,
    \subfigure[Comparison for MLP base classifiers under the APC metric.\label{app:fig:abl_1}]{
         \includegraphics[width=0.48\linewidth]{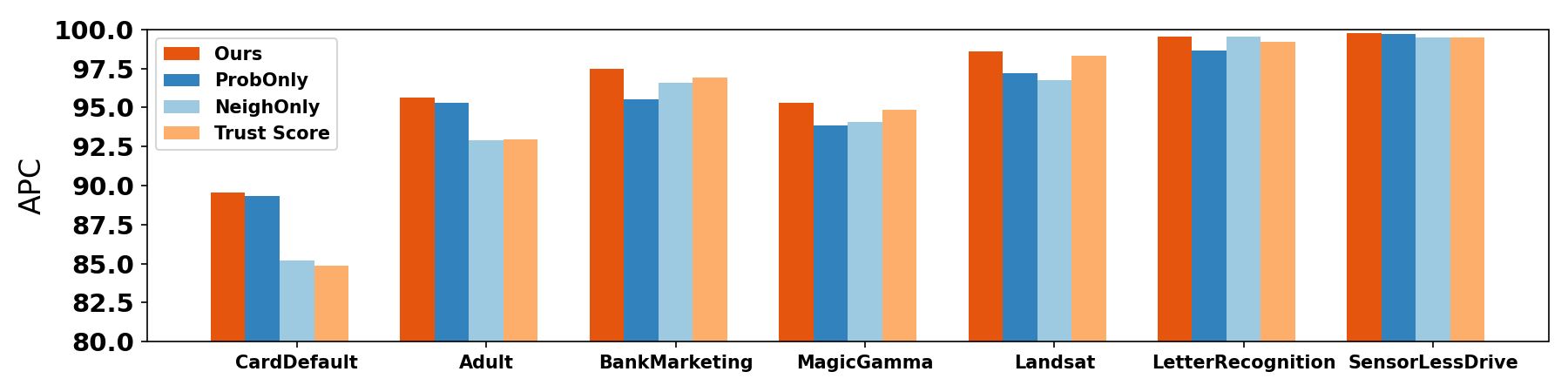} } \\
 
    \subfigure[Comparison for MLP base classifiers under the APM metric.\label{app:fig:abl_1}]{
         \includegraphics[width=0.48\linewidth]{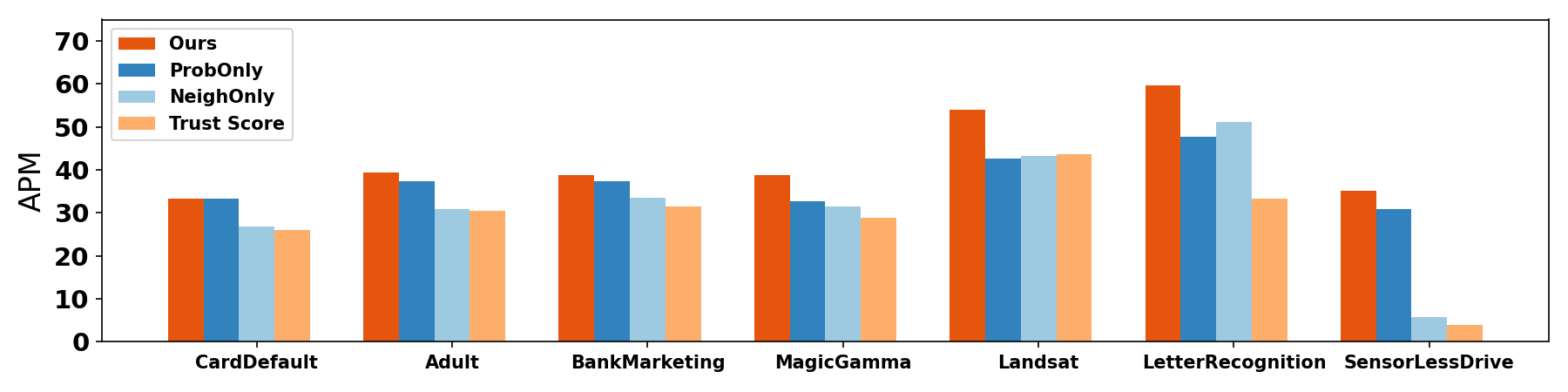} }         
         
    \caption{\textbf{Performance results for ablation study.} 
      }
    \label{fig:ablation-app}
\end{figure*}

\section{Mislabel Detection: A Case Study}
\label{app:mislabel}

This section showcases the effectiveness of our \methodcmd algorithm. 
We use a dataset named \textit{QuoraInsQ} in the Kaggle competition~\cite{QuoraInsQ:Kaggle} to identify and flag insincere questions.
This dataset consists of 1,306,122 questions.

Firstly, we select the model of the top-ranking team from the leader board as our base classifier 
and use \methodcmd to compute the reliability score for each sample.
To determine the mislabelling rate, 
we manually relabel the data and estimate $p$ as $0.03$.
Then, we run \methodcmd with the confidence level $\alpha=5\%$ and obtain the detected mislabeled results. 

\tablename{~\ref{tab:mislabel}} showcases 100 detected example questions 
with the lowest trustworthiness scores.
We find that all of them are mislabeled as \emph{`insincere'} in the original dataset. 

\begin{longtable}{|p{0.90\linewidth}|p{0.05\linewidth}|}
\toprule
\textbf{Detected mislabeled example questions that are originally labeled as insincere}  & \textbf{Score}  \\ \hline
\endfirsthead
\toprule
\textbf{Detected mislabeled example questions that are originally labeled as insincere}  & \textbf{Score}  \\ \hline
\endhead
What is the difference between a fusion and a restaurant?& 0.0207 \\ \hline
What are the new product for agent project? & 0.0212 \\ \hline
Which protagonist from a video game have you most related to?                                                                                                                                                                                 & 0.0216 \\ \hline
Do I have to appear for IMU CET again   even if I get a good rank in it if I'm appearing for improvement of HSC board   exam?                                                                                                                   & 0.0216 \\ \hline
What are some important things/steps when   starting a film production company in Netherlands?                                                                                                                                                  & 0.0217 \\ \hline
What astrological combinations are needed   to obtain a scholarship for studies?                                                                                                                                                                & 0.0219 \\ \hline
What advice would you give a person   intending to buy a Nissan note, in terms of performance i.e. traction, fuel   economy, maintenance and resale?                                                                                            & 0.0222 \\ \hline
What is the maximum speed that Balaji   Vishwanathan can achieve on a treadmill while singing bhajans?                                                                                                                                          & 0.0222 \\ \hline
Is Edward Snowden a computer simulated   character produced at Pixar Studios in Hollywood?                                                                                                                                                      & 0.0222 \\ \hline
What Elements Green Relief Have?                                                                                                                                                                                                                & 0.0223 \\ \hline
What kind of relationship is between   Sushi and Happiness?                                                                                                                                                                                     & 0.0223 \\ \hline
How cool are diabetes?                                                                                                                                                                                                                          & 0.0223 \\ \hline
What is similar between sanitary pads and   Subhash Chandra Bose?                                                                                                                                                                               & 0.0223 \\ \hline
Is there any spot round in COMEDK? If   yes, how is it conducting?                                                                                                                                                                              & 0.0224 \\ \hline
What are the more steps in Career   Oriented Education?                                                                                                                                                                                         & 0.0224 \\ \hline
Which branches can I get in cse in UPES   at the rank of 20000?                                                                                                                                                                                 & 0.0224 \\ \hline
How can I become Bahubali?                                                                                                                                                                                                                      & 0.0225 \\ \hline
What are some ways I can help melt the   polar ice caps?                                                                                                                                                                                        & 0.0226 \\ \hline
What are the possible motives of RAW for   assassination of legend Bollywood actor OM Puri?                                                                                                                                                     & 0.0228 \\ \hline
Are there any aquaculture interest-free   loans?                                                                                                                                                                                                & 0.0228 \\ \hline
Which good ingredients include in hair   Eternity?                                                                                                                                                                                              & 0.0228 \\ \hline
What's the function of Charcoal mask   blackhead remover mask?                                                                                                                                                                                  & 0.0228 \\ \hline
Which uses more fuel, air-conditioning on   or Windows and the sun roof open?                                                                                                                                                                   & 0.0229 \\ \hline
When we park a car in the Sun, what is   the phenomenon known as?                                                                                                                                                                               & 0.0229 \\ \hline
Does the CTMU say that reality consists   of mutual perceptions of various operators which have internal or cognitive,   and "external" or informational aspects? Do atoms recognize and   process each other according to the laws of physics? & 0.0229 \\ \hline
Is frozen tamod a popular dessert in the   Philippines?                                                                                                                                                                                         & 0.0229 \\ \hline
How do you unlock your phone from a Gmail   account?                                                                                                                                                                                            & 0.023  \\ \hline
What is the function of Hydrapharm   Anatabloc?                                                                                                                                                                                                 & 0.023  \\ \hline
Which is the best Federal skilled worker   visa consultancy for Canada?                                                                                                                                                                         & 0.023  \\ \hline
What is the 3rd letter of the English   alphabet?                                                                                                                                                                                               & 0.023  \\ \hline
What are some bad habits of Indian   recruiter/HR?                                                                                                                                                                                              & 0.023  \\ \hline
What are the requirements to study and   work in Canada for international students?                                                                                                                                                             & 0.023  \\ \hline
What are the best companies for Ruby on   Rails development in Toronto?                                                                                                                                                                         & 0.023  \\ \hline
Is it better to read a book from back to   front or from front to back?                                                                                                                                                                         & 0.0231 \\ \hline
What is freight forwarder management   system?                                                                                                                                                                                                  & 0.0231 \\ \hline
How do I start Canada PR Visa process?                                                                                                                                                                                                          & 0.0231 \\ \hline
Is bullying among the funniest activities   for kids in a traditional classroom?                                                                                                                                                                & 0.0232 \\ \hline
How can I use law of attraction for   passing a exam?                                                                                                                                                                                           & 0.0232 \\ \hline
Are cyclists required to stop during red   lights at Traffic stop?                                                                                                                                                                              & 0.0232 \\ \hline
What is more important to become a pilot   computer or Punjabi?                                                                                                                                                                                 & 0.0232 \\ \hline
How do you get a miracle if the event is   impossible such as walking through a solid wall or flying to the moon?                                                                                                                               & 0.0232 \\ \hline
How can I get certified in rolfing in New   York?                                                                                                                                                                                               & 0.0232 \\ \hline
What is the Amazon exclusive (10.or)   Tenor smartphone company all about?                                                                                                                                                                      & 0.0233 \\ \hline
What is the best detoxification treatment   center in New Jersey?                                                                                                                                                                               & 0.0233 \\ \hline
What are some arguments why an   inappropriate relationship between a teacher and an adult student would be   beneficial for the educational process in the long term?                                                                          & 0.0233 \\ \hline
What is a stock market and how do I club   seals effectively?                                                                                                                                                                                   & 0.0234 \\ \hline
How did I become famous?                                                                                                                                                                                                                        & 0.0235 \\ \hline
What differences are there between the   economic system of a business and that of a country?                                                                                                                                                   & 0.0235 \\ \hline
Where can I supply (KCN) potassium   Cyanide?                                                                                                                                                                                                   & 0.0235 \\ \hline
What industries have the most potential   for now or the near future?                                                                                                                                                                           & 0.0235 \\ \hline
Who is a better wedding planner in Bhopal   and Indore than me?                                                                                                                                                                                 & 0.0235 \\ \hline
What are the easiest ways to find   investment opportunities?                                                                                                                                                                                   & 0.0236 \\ \hline
How or who maintained the harmony in the   complicated system of the extra terrestrial bodies?                                                                                                                                                  & 0.0236 \\ \hline
How can I locate someone who hacked my   Instagram account?                                                                                                                                                                                     & 0.0236 \\ \hline
What did you think of the Sword Art   Online movie?                                                                                                                                                                                             & 0.0236 \\ \hline
Who offers the cheapest car insurance in   New Jersey?                                                                                                                                                                                          & 0.0237 \\ \hline
Is it possible to watch Baahubali in a   theatre without updating your status on Facebook?                                                                                                                                                      & 0.0237 \\ \hline
Who offers the most affordable car   insurance quotes in New Jersey?                                                                                                                                                                            & 0.0237 \\ \hline
What is no pen wall painting?                                                                                                                                                                                                                   & 0.0237 \\ \hline
Can we have the patent or copyright for a   website?                                                                                                                                                                                            & 0.0237 \\ \hline
What are the basics of auto insurance?                                                                                                                                                                                                          & 0.0237 \\ \hline
How can I achieve divine status?                                                                                                                                                                                                                & 0.0238 \\ \hline
Under what circumstance would Tom Daley   become king?                                                                                                                                                                                          & 0.0238 \\ \hline
What do you look like in a revealing   dress?                                                                                                                                                                                                   & 0.0238 \\ \hline
How do I track someone's location without   having to add anything on their phone?                                                                                                                                                              & 0.0238 \\ \hline
How do psychedelic drugs affect   introverts?                                                                                                                                                                                                   & 0.0238 \\ \hline
What are the new markets and investment   opportunities this year?                                                                                                                                                                              & 0.0238 \\ \hline
Which are the best consultancies in   Chandigarh?                                                                                                                                                                                               & 0.0238 \\ \hline
What anomalies do surgeons notice between   different races?                                                                                                                                                                                    & 0.0238 \\ \hline
Should I still work as a software   developer if I earn 40 million USD by selling my company's ICO token?                                                                                                                                       & 0.0238 \\ \hline
What is the Trumputin effect?                                                                                                                                                                                                                   & 0.0239 \\ \hline
What could be the reason for people   buying clothes online instead of the retail shops which has a good quality of   clothes but the only drawback could be the price differences?                                                             & 0.0239 \\ \hline
Where can I find android app developers?                                                                                                                                                                                                        & 0.0239 \\ \hline
How can I permanently set priority in the   windows task bar?                                                                                                                                                                                   & 0.0239 \\ \hline
What are the new outstanding investment   opportunities this year?                                                                                                                                                                              & 0.0239 \\ \hline
Are there any games that have similar   economy gameplay that Runescape used to have in its prime?                                                                                                                                              & 0.0239 \\ \hline
How can someone score 180+ in JEE (MAIN)   2017 if he/she becomes serious just one month before the D-Day?                                                                                                                                      & 0.024  \\ \hline
What kinds of crossover SUVs does   Chevrolet offer?                                                                                                                                                                                            & 0.024  \\ \hline
What was the best SUV of the year in   2017?                                                                                                                                                                                                    & 0.0241 \\ \hline
Apex Rush Testo reviews: How Does It   Work? Read Before BUY?                                                                                                                                                                                   & 0.0241 \\ \hline
Why is the burrete readings differences   are equal?                                                                                                                                                                                            & 0.0241 \\ \hline
What are the components of the XL Test   Plus?                                                                                                                                                                                                  & 0.0241 \\ \hline
How do I reset a LinkedIn password?                                                                                                                                                                                                             & 0.0242 \\ \hline
Which is the best Surrogacy Clinic in   Kolkata?                                                                                                                                                                                                & 0.0242 \\ \hline
What are some popular night spots in   Chicago?                                                                                                                                                                                                 & 0.0242 \\ \hline
Where can I find best romantic shayaris?                                                                                                                                                                                                        & 0.0242 \\ \hline
How easy it is for a girl to switch from   one guy to other in a very short span?                                                                                                                                                               & 0.0243 \\ \hline
What's the potential for the decoration   industry?                                                                                                                                                                                             & 0.0243 \\ \hline
How do hackers think and work?                                                                                                                                                                                                                  & 0.0243 \\ \hline
What would be the process of selling The   United States?                                                                                                                                                                                       & 0.0243 \\ \hline
What are the best business to start with   20 to 30 Lacs of investments?                                                                                                                                                                        & 0.0243 \\ \hline
How can I register as a freight forwarder   in international services?                                                                                                                                                                          & 0.0243 \\ \hline
When did the UK switch to English?                                                                                                                                                                                                              & 0.0243 \\ \hline
What are the best thesis writing service   provider in kuwait?                                                                                                                                                                                  & 0.0243 \\ \hline
What is the best site to Download   Avengers: Infinity War in HD?                                                                                                                                                                               & 0.0243 \\ \hline
What is the global best low investment   business?                                                                                                                                                                                              & 0.0244 \\ \hline
Is crossing between spider and humans   possible?                                                                                                                                                                                               & 0.0244 \\ \hline
What kinds of animals do you have to   date? Would you like to add more?                                                                                                                                                                        & 0.0244 \\ \hline
What are the alternatives available to   KCN in Indian pharmacy stores?                                                                                                                                                                         & 0.0244 \\ 
\bottomrule
\caption{ 100 Samples from \textit{QuoraInsQ} that are mislabeled as insincere and detected by \methodcmd and 
    their corresponding reliability scores ($\times 10^2$)}
\label{tab:mislabel-app}
\end{longtable}

\end{document}